\ificcvfinal\pagestyle{empty}\fi
\newtheorem{definition}{Definition}
\newtheorem{proposition}{Proposition}
\begin{document}

%%%%%%%%% TITLE
%still not 100% happy with the title
\title{Encoder Based Lifelong Learning}

% \author{Amal Rannen\\
% KU Leuven \\
% Institution1 address\\
% {\tt\small firstauthor@i1.org}
% % For a paper whose authors are all at the same institution,
% % omit the following lines up until the closing ``}''.
% % Additional authors and addresses can be added with ``\and'',
% % just like the second author.
% % To save space, use either the email address or home page, not both
% \and
% Rahaf Aljundi\\
% Ku Leuven\\
% First line of institution2 address\\
% {\tt\small secondauthor@i2.org}
% }
\author{Amal Rannen Triki \thanks{Amal Rannen Triki and Rahaf Aljundi contributed equally to this work and listed in alphabetical order.} \thanks{Amal Rannen Triki is also affiliated with Yonsei University. } \quad Rahaf Aljundi$^{ 	\ast}$ \quad  Mathew B. Blaschko \quad  Tinne Tuytelaars\\
KU Leuven\\
KU Leuven, ESAT-PSI, iMinds, Belgium \\
{\tt\small firstname.lastname@esat.kuleuven.be}
% For a paper whose authors are all at the same institution,
% omit the following lines up until the closing ``}''.
% Additional authors and addresses can be added with ``\and'',
% just like the second author.
% To save space, use either the email address or home page, not both
% \and
% Second Author\\
% Institution2\\
% First line of institution2 address\\
% {\tt\small secondauthor@i2.org}
}

\maketitle
%\thispagestyle{empty}

%%%%%%%%% ABSTRACT
\begin{abstract}
%Vision systems face the risk of forgetting when exposed to a sequence of tasks, as in a lifelong learning scenario where the previously seen data are not stored. In the complete absence  of the previous data, the existing solutions are either to use the new task instead as an approximation or store the parameters learned on the previous tasks. The first solution is sensitive  the changes in the data distributions while the second requires  considerable amount of memory resources. In this work, we introduce a compromise between the two approaches using autoencoders. For each task, an under-complete autoencoder is used to capture the features that are crucial for its achievement. When a new task is presented to the system, the learned autoencoders are used to preserve the important features of the previous tasks.
%The proposed system is tested on object recognition tasks and shows  a reduction of forgetting over the previous works. 

This paper introduces a new lifelong learning solution where a single model is trained for a sequence of tasks. The main challenge that vision systems face in this context is catastrophic forgetting: as they tend to adapt to the most recently seen task, they lose  performance on the tasks that were learned previously. Our method aims at preserving the knowledge of the previous tasks while learning a new one by using autoencoders. For each task, an under-complete autoencoder is learned, capturing the features that are crucial for its achievement. When a new task is presented to the system, we prevent the reconstructions of the features with these autoencoders from changing, which has the effect of preserving the information on which the previous tasks are mainly relying. At the same time, the features are given space to adjust to the most recent environment as only their projection into a low dimension submanifold is controlled. The proposed system is evaluated on image classification tasks and shows a reduction of forgetting over the state-of-the-art. 
\end{abstract}

\section{Introduction}
Intelligent agents are able to perform remarkably well on individual tasks. However, when exposed to a new task or a new environment, such agents have to be retrained. In this process, they learn the specificity of the new task but tend to loose performance on the tasks they have learned before. 
%As a consequence only the recently seen task has the advantage to be performed successfully. 
For instance, imagine an agent that was trained to localize the defects on a set of factory products. Then, when new products are introduced and the agent has to learn to detect the anomalies in these new products, it faces the risk of forgetting the initial recognition task.
% An important step towards building an artificial general intelligence is to design models that learn to achieve new tasks while preserving their performance on tasks for which it has been previously optimized. In practice, each time a new task is presented to the model, it adapts to capture the variations of the lately seen data, and tends to loose performance on the previous tasks. 
This phenomenon is known as {\em catastrophic forgetting}~\cite{mccloskey1989catastrophic,ratcliff1990connectionist,mcclelland1995there,french1999catastrophic,goodfellow2013empirical}. It occurs when the datasets or the tasks are presented  to the model separately and sequentially, as is the  case in a lifelong learning setup~\cite{silver2002task,silver2013lifelong,rusu2016progressive}.

The  main challenge is to make the learned model adapt to new data from a similar or a different environment~\cite{pentina2015lifelong}, without losing knowledge on the previously seen task(s). Most of the classical solutions for this challenge suffer from important drawbacks. Feature extraction (as in~\cite{donahue2014decaf}), where the model / representation learned for the old task is re-used to extract features from the new data without adapting the model parameters, is highly conservative for the old task and suboptimal for the new one. Fine-tuning (as in~\cite{girshick2014rich}),  adapts the model to the new task using the optimal parameters of the old task as initialization. As a result, the model is driven towards the newly seen data but forgets what was learned previously. Joint training (as in~\cite{caruana1998multitask}) is a method where the model is trained jointly on previous and current tasks data. This  solution is optimal for all tasks, but requires the presence of all the data at the same time. Such a requirement can be hard to meet, especially in the era of big data. 

%may have important drawbacks. While feature extraction is highly conservative for old data and under-optimal for the new one, fine-tuning has a better performance on the new data but tends to forget about the old task when adapting to the new environment. In addition, joint-learning, where the solution is optimal for all tasks, requires the presence of all the data in the same time, which needs large memory and is a fairly strong assumption that is often not satisfied in real applications. 

To overcome these drawbacks without the constraint of storing data from the previously seen tasks, two main approaches can be found in the literature. The first, presented in~\cite{li2016learning}, proposes a way to train  convolutional networks, where a shared model is used for the different tasks but with separate classification layers. When a new task is presented, a new classification layer is added. Then, the model is fine-tuned on the data of the new task, with an additional loss that incorporates the knowledge of the old tasks. This loss tries to keep the previous task predictions on the new data unchanged. Such a solution reduces the forgetting but is heavily relying on the new task data.
% TT: THE ABOVE ARGUMENT SOUNDS VAGUE TO ME: YOU HAVE TO RELY ON THE NEW TASK DATA, AS THAT'S ALL YOU HAVE...
As  a consequence, it suffers from a build-up of errors when facing a sequence of tasks~\cite{aljundi2016expert}.
%RA: We didn't mention here that it was introduced in the context of reinforncment learning 
The work presented recently in~\cite{kirkpatrick2016overcoming} tackles the problem in a different way. Rather than having a data-oriented analysis, they consider the knowledge gained in the model itself, and transfer it from one task to another in a Bayesian update fashion. 
%The first step is then to start from a prior on the weights and compute the posterior after seeing the data of the first task.  This posterior is then used as a prior for the next step. 
The method relies on approximating the weight distribution after training the model on the new task. A Gaussian distribution, for which the mean is given by the optimal weights for the first task, and the variance given by the diagonal of the Fisher information matrix is used as an approximation. Such a solution is based on a strong principle and gives interesting results. However, it requires to store a number of parameters that is comparable to the size of the model itself. 

In this work, we propose a compromise between these two methods. Rather than heavily relying on the new task data or requiring a huge amount of parameters to be stored, we introduce the use of autoencoders as a tool to preserve the knowledge from one task while learning another. For each task, an undercomplete autoencoder is trained after training the task model. It captures the most important features for the task objective. When facing a new task, this autoencoder is used to ensure the preservation of those important features. This is achieved by defining a loss on the reconstructions made by the autoencoder, as we will explain in the following sections. In this manner, we only restrict a subset of the features to be unchanged while we give the model the freedom to adapt itself to the new task using the remaining capacity. Figure~\ref{fig:Global_model} displays the model we propose to use. 

\begin{figure}[t]
\centering
\includegraphics[width =\columnwidth]{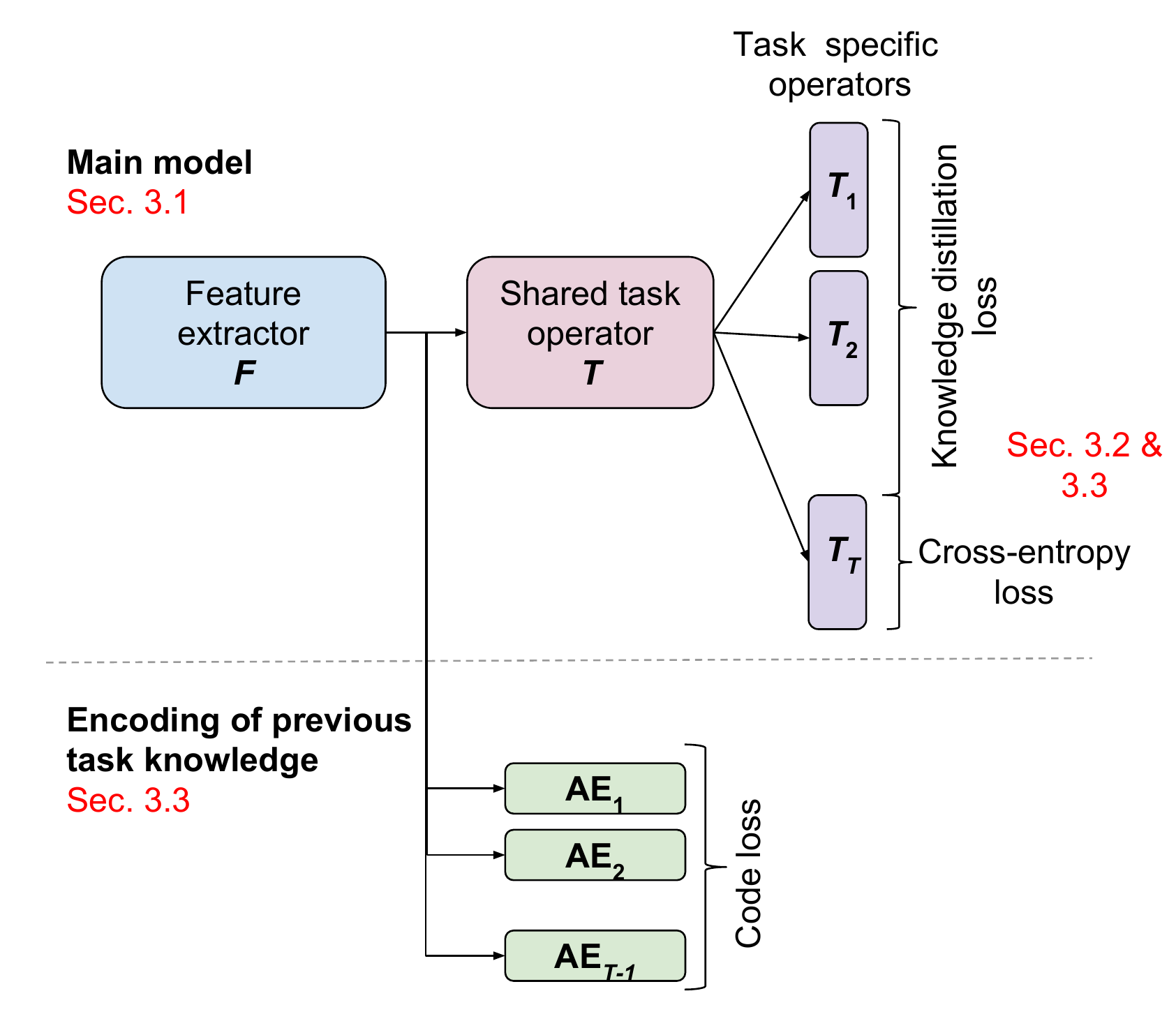}
\caption{Diagram of the proposed model.  Above the dotted line are the model components that are retained during test time, while below the dashed line are components necessary to our improved training scheme.}
\label{fig:Global_model}
\end{figure}

Below, we first give a short description of the most relevant related work in Sec.~\ref{Sec:Related}. Then, in Sec~\ref{Sec:Main}, we describe how to use the autoencoders to avoid catastrophic forgetting and motivate our choice by a short analysis that relates the proposed objective to the joint training  scheme. In Sec.~\ref{Sec:Exp}, we describe the experiments that we conducted, report and discuss their results before concluding in Sec.~\ref{Sec:Conclusion}.

\section{Related work} \label{Sec:Related}

Our end goal is to train a single model that can perform well on multiple tasks, with tasks learned sequentially. This problem is at the intersection of \textbf{Joint training} (or multitask training) and \textbf{Lifelong learning}.
Standard multi-task learning \cite{caruana1998multitask} aims to learn jointly from the data of the multiple tasks and uses inductive bias~\cite{mitchell1980need} in order to integrate the knowledge from the different domains in a single model.  
However, it requires the presence of data from all the tasks during training. In a lifelong learning scenario, on the other hand, tasks are treated in a sequential manner. The aim is then to exploit the knowledge from previous tasks while learning a new one. This knowledge is used to 
\begin{inparaenum}[(i)]
\item preserve the performance on the previously seen data, 
\item improve this knowledge using inductive bias~\cite{mitchell1980need} from the new task data, and
\item act as a regularizer for the new task, which can be beneficial for the performance.
\end{inparaenum}
{\em In this work we aim at preserving the knowledge of the previous tasks and possibly benefiting from this knowledge while learning a new task, without storing data from previous tasks. }

% A desirable property for such a system~\cite{silver2013lifelong} is to be able to capture the meta-knowledge of the previous task data. We take here a slightly different point of view. Rather than considering the task data itself, we consider its projection into a feature space. Indeed, the feature space has the advantage of being of a significantly lower dimension, while containing the information that is used in practice for the achievement of the task. The method that we propose to us autoencoders that are trained on the previous task optimal features. Such autoencoders learn the distribution of the features, and capture their meta-knowledge. 

Despite its potential benefits, this problem is under explored. \textbf{Learning without forgetting} (LwF), introduced in~\cite{li2016learning},  
% TT: SHOULD WE ADD A REF TO THIS OLDER WORK WITH THE SAME IDEA ? 
%RA Its \cite{furlanello2016active} Active Long Term Memory
proposes to preserve the previous performance through the knowledge distillation loss introduced in~\cite{hinton2015distilling}. They consider a shared convolutional network between the different tasks in which only the last classification layer is task specific. When encountering a new task, the outputs of the existing classification layers given the new task data are recorded. During training, these outputs are preserved through a modified cross-entropy loss that softens the class probabilities in order to give a higher weight to the small outputs. More details about this loss and the method can be found in Sec.~\ref{Sec:LWF}. This method reduces the forgetting, especially when the datasets come from related manifolds. Nevertheless, it has been shown by \textbf{iCaRL: Incremental Classifier and Representation Learning} ~\cite{rebuffi2016icarl} that LwF suffers from a build up of errors in a sequential scenario where the data comes from the same environment. Similarly, \textbf{Expert-gate}~\cite{aljundi2016expert} shows that the  LwF performance drops when the model is exposed to a sequence of tasks drawn from different distributions. 
\cite{rebuffi2016icarl} proposes to store a selection of the previous tasks data to overcome this issue -- something we try to avoid. 
The goal in~\cite{aljundi2016expert} is to obtain experts for different tasks (instead of a single joint model shared by all tasks). They suggest a model of lifelong learning where experts on individual tasks are added to a network of models sequentially. The challenge then is to decide which expert to launch based on the input. Interestingly, they also use undercomplete autoencoders -- in their case to capture the context of each task based on which a decision is made on which task the test sample belongs to. {\em In this work, we build on top of LwF but reduce the cumulated errors using under-complete autoencoders learned on the optimal representations of the previous tasks.}

% \textbf{iCaRL: Incremental Classifier and Representation Learning} ~\cite{rebuffi2016icarl} tackles the problem of data storage differently. Rather than considering training the network with data from different tasks, they consider a scenario where data from the same domain are presented incrementally to the model. In this scenario, at each step, data from some (but not necessarily all) of the previously seen classes and data from unseen classes are used for training. In order to achieve the training, the authors propose to use the same knowledge distillation loss as in LwF, but also to keep in memory a subset of samples from the previously seen classes. These selected samples (called \textit{Exemplar set}) are chosen to approximate the best the mean feature vector (output of convolutional layers in ConvNets) of all the training data. The number of the used exemplars is reduced if they exceed a memory usage fixed in advance. The authors also show that iCaRL outperform LwF in the preservation of the performance for the previously seen classes. In this work, rather that keeping some examples in order to approximated the mean feature vector of the previously seen data, we propose to use an AE in order to capture the variations of these features that are important to the task of interest.

Even more recently, another interesting solution to train shared models without access to the previous data, somewhat similar in spirit to our work, has been proposed in~\cite{kirkpatrick2016overcoming}, in the context of reinforcement learning. The main idea of this method, called \textbf{Elastic weight consolidation}, is to constrain the weights $W_i$ while training for a second task with an additional loss $\sum_i \frac{\lambda}{2} F_i(W_i - W^*_{1,i})$ where $W^*_{1,i}$ are the optimal weights of the first task, and $F_i$ the diagonal terms of their Fisher information matrix.
% Rather than considering joint training in a maximum likelihood fashion, they consider maximum a posteriori estimation, and prior-posterior updates from one task to the other.
% From this point of view, training the network corresponds to  maximizing the posterior $ p(W|\mathcal{D})$ where $\mathcal{D}$ designates the data presented to the network and $W$ its weights. When we have a network trained on $\mathcal{D}_1$ and we want to train it on new data $\mathcal{D}_2$, $p(W|\mathcal{D}_1)$ is used as a prior. Since the exact distribution $p(W|\mathcal{D}_1)$ is not accessible,  the authors propose to model it by a Gaussian distribution, in which the mean is set to the optimal weights obtained in the previous stage, and the variance to the Fisher information matrix diagonal terms. 
% The objective to minimize to train the second task is then: 
% \begin{equation}
% \mathcal{L}(W) = - \log p(\mathcal{D}_2|W) + \sum_i \frac{\lambda}{2} F_i(W_i - W^*_{1,i})
% \end{equation}
The use of the Fisher matrix prevents the weights that are important for the first task to change much. In our point of view, this method, despite its success, has two drawbacks.  First, the method keeps the weights in a neighborhood of one possible minimizer of the empirical risk of the first task. However, there could be another solution that can give a better compromise between the two tasks.  Second, it needs to store a large number of parameters that grows with the total number of weights and the number of tasks. For these reasons, {\em rather than constraining the weights, we choose to constrain the resulting features, enforcing that those that are important for the previous tasks do not change much}. By constraining only a sub-manifold of the features, we allow the weights to adjust so as to optimize the features for the new task, while preserving those that ensure a good performance on the previous tasks.  

\section{Overcoming forgetting with autoencoders}\label{Sec:Main}

In this work, we consider the problem of training a supervised deep model that can be useful for multiple tasks, in the situation where at each stage the data fed to the network come always from one single task, and the tasks enter in the training scenario successively. 
The best performance for all the tasks simultaneously is achieved when the network is trained on the data from all the considered tasks at the same time (as in joint training). This performance is of course limited by the capacity of the used model, and can be considered an upper bound to what can be achieved in a lifelong learning setting, where the data of previous tasks is no longer accessible when learning a new one.
%RA: what we say here depends on what we first say in the intro ... we can check it together later
\subsection{Joint training}
In the following, we will use the notations $\mathcal{X}^{(t)}$ (model input) and $\mathcal{Y}^{(t)}$ (target) for the random variables from which the dataset of the task $t$ is sampled, and $X_i^{(t)}$ and $Y_i^{(t)}$ for the data samples.
When we have access to the data from all $\mathcal{T}$ tasks jointly, the network training aims to control the statistical risk:
\begin{equation}
\sum_{t=1}^\mathcal{T} \mathbb{E}_{(\mathcal{X}^{(t)},\mathcal{Y}^{(t)})}[\ell(f_t(\mathcal{X}^{(t)}),\mathcal{Y}^{(t)})],
\label{eq:StatRisk}
\end{equation}
by minimizing the empirical risk:
\begin{equation}
\sum_{t=1}^\mathcal{T} \frac{1}{N_t} \sum_{i=1}^{N_t} \ell(f_t(X_i^{(t)}),Y_i^{(t)}),
\label{eq:EmpRisk}
\end{equation}
where $N_t$ is the number of samples and $f_t$ the function implemented by the network for task $t$. For most of the commonly used models, we can decompose $f_t$ as $T_t\circ T \circ F$ where:
\begin{itemize}
\item $F$ is a feature extraction function (e.g.\ Convolutional layers in ConvNets)
\item $T_t\circ T$ is a task operator. It can be for example a classifier or a segmentation operator. $T$ is shared among all tasks, while $T_t$ is task specific. (e.g.\ in ConvNets, $T_t$ could be the last fully-connected layer, and $T$ the remaining fully-connected layers.)
\end{itemize}

% \begin{figure}[t]
% \centering
% \includegraphics[width = \columnwidth]{}
% \caption{General model to train for multiple tasks}
% \label{fig:Main_model}
% \end{figure}

The upper part of Figure~\ref{fig:Global_model} gives a scheme of this general model. For simplicity, we will focus below on two-task training before generalizing to a multiple task scenario in section~\ref{Sec:Proc}. 

% For the following analysis, we will use this notation:
% \begin{definition}
% We say that $f(x) = \mathcal{O}_{x\rightarrow x_0}(g(x))$ if and only if there exist two constants $a$ and $K$ such that: 
% \begin{equation*}
% \|x-x_0\| \le a \Rightarrow \|f(x)\| \le C \|g(x)\|
% \end{equation*}
% \end{definition}

\subsection{Shortcomings of Learning without Forgetting}\label{Sec:LWF}

As a first step, we want to understand the limitations of LwF~\cite{li2016learning}. In that work, it is suggested to replace in Eq~\eqref{eq:StatRisk} $ \ell(T_1\circ\,T\circ\,F(\mathcal{X}^{(1)}),~\mathcal{Y}^{(1)}) $~~~~~with~~~~~~$ \ell(T_1\circ\,T\circ\,F(\mathcal{X}^{(2)}), \\ T^*_1\circ\,T^* \circ\,F^*(\mathcal{X}^{(2)})) $, where $T^*_1\circ\,T^* \circ\,F^*$ is obtained from training the network on the first task.
If we suppose that the model has enough capacity to integrate the knowledge of the first task with a small generalization error, then we can consider that  
\begin{equation}
\mathbb{E}_{(\mathcal{X}^{(1)})}[\ell(T_1\circ\,T\circ\,F(\mathcal{X}^{(1)}),T^*_1\circ\,T^* \circ\,F^*(\mathcal{X}^{(1)}))]
\label{eq:LossApprox}
\end{equation} is a reasonable approximation of $\mathbb{E}_{(\mathcal{X}^{(1)},\mathcal{Y}^{(1)})}[\ell(T_1\circ\,T\circ\,F(\mathcal{X}^{(1)}),\mathcal{Y}^{(1)})]$. However, in order to be able to compute the measure~\eqref{eq:LossApprox} using samples from $\mathcal{X}^{(2)}$, further conditions need to be satisfied. 

In other terms, if we consider that $T_1\circ\,T\circ\,F$ tries to learn an encoding of the data in the target space $\mathcal{X}^{(1)}$, then one can say that the loss of information generated by the use of $\mathcal{X}^{(2)}$ instead of $\mathcal{X}^{(1)}$  is a function of the Kullback-Leibler divergence of the two related probability distributions, or equivalently of their cross-entropy. Thus, if the two data distributions are related, then LwF is likely to lead to high performance. If the condition of the relatedness of the data distributions fails, there is no direct guarantee that the use of $\ell(T_1\circ\,T\circ\,F(\mathcal{X}^{(2)}),T^*_1\circ\,T^* \circ\,F^*(\mathcal{X}^{(2)})) $ will not result in an important loss of information for the first task. Indeed, it has been shown empirically in~\cite{aljundi2016expert} that the use of significantly different data distributions may result in a significant decrease in performance for LwF.

LwF is based on the knowledge distillation loss introduced in~\cite{hinton2015distilling} to reduce the gap resulting from the use of different distributions. In this work, we build on top of the LwF method. In order to make the used approximation less sensitive to the data distributions, we see an opportunity in controlling $\|T_1\circ\,T\circ\,F(\mathcal{X}^{(1)})-T_1\circ\,T\circ\,F(\mathcal{X}^{(2)})\|$. Under mild conditions about the model functions, namely Lipschitz continuity, this control allows us to use $T_1\circ\,T\circ\,F(\mathcal{X}^{(2)})$ instead of $T_1\circ\,T\circ\,F(\mathcal{X}^{(1)})$ to better approximate the first task loss in Eq.~\eqref{eq:StatRisk}.  Note that the condition of continuity on which this observation is based is not restrictive in practice. Indeed, most of the commonly used functions in deep models satisfy this condition (e.g.\ sigmoid, ReLU). 

Our main idea is to learn a submanifold of the representation space $F(\mathcal{X}^{(1)})$ that contains the most informative features for the first task. Once this submanifold is identified, if the projection of the features  $F(\mathcal{X}^{(2)})$ onto this submanifold do not change much during the training of a second task,
%changes in the neighborhood of the informative features for the first task, 
then two consequences follow: 
\begin{inparaenum}[(i)]
\item $F(\mathcal{X}^{(2)})$ will stay informative for the first task %$T_1\circ\,T$ 
during the training, and  
\item at the same time there is room to adjust to the second task %$T_2\circ\,T$ 
as only its projection in the learned submanifold is controlled.
\end{inparaenum}
Figure~\ref{fig:preserve_feat} gives a simplified visualization of this mechanism. In the next paragraphs, we propose a method to learn the submanifold of informative features for a given task using autoencoders.
\begin{figure}[t]
\centering
\includegraphics[width = 0.9\columnwidth]{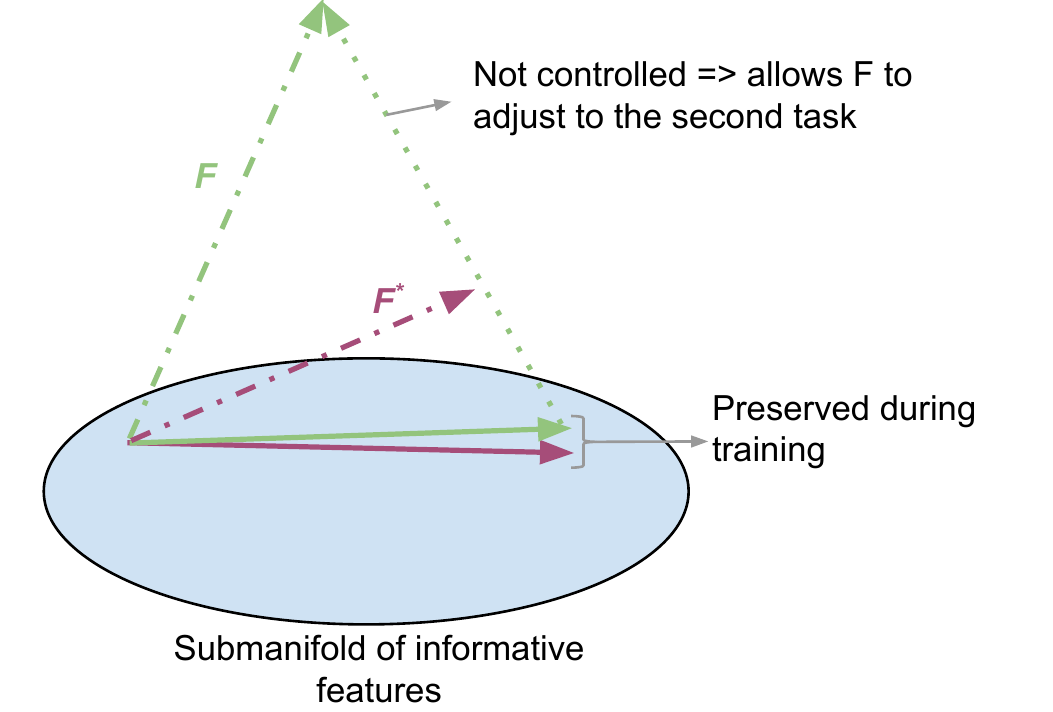}
\caption{Preservation of the features that are important for task 1 while training on task 2. During training, we enforce the projection of $F$ into the submanifold that captures these important features to stay close to the projection of $F^*$, the optimal features for the first task. The part of $F$ that is not meaningful for the first task is allowed to adjust to the variations of the second task.}
\label{fig:preserve_feat}
 \vspace*{-0.2cm} 
\end{figure}
% If we suppose the all the functions composing the model are continuous, the following lemma holds:

% \begin{lemma}\label{lem:FeatDist}

% $\forall\ \epsilon>0, \exists\ \eta>0$  such that  $\|F(\mathcal{X}^{(1)})-F(\mathcal{X}^{(2)})\| < \eta $ implies $\|T_1\circ~T\circ~F(\mathcal{X}^{(1)})-T_1\circ T\circ~F(\mathcal{X}^{(2)})\| < \epsilon$. 

% \end{lemma}

% Lemma~\ref{lem:FeatDist} is a direct consequence of the continuity of $T_1\circ~T$. Note that the continuity condition on which this lemma is based is not restrictive in practice. Indeed, most of the commonly used functions in deep models satisfy this condition (e.g.\ sigmoid, ReLU). Our main contribution is then  based on a way to partially control $\|F(\mathcal{X}^{(1)})-F(\mathcal{X}^{(2)})\|$ without having access to the data of the first task. 

\subsection{Informative feature preservation}

When beginning to train the second task, the feature extractor $F^*$ of the model is optimized for the first task. A feature extraction type of approach would keep this operator unchanged in order to preserve the performance on the previous task. This is, however, overly conservative, and usually suboptimal for the new task. Rather than preserving \emph{all} the features during training, our main idea is to preserve only the features that are the most informative for the first task while giving more flexibility for the other features in order to improve the performance on the second task. An autoencoder~\cite{bourlard1988auto} trained on the representation of the first task data obtained from an optimized model can be used to capture the most important features for this task. %We choose here to train undercomplete autoencoders. 
\vspace{-6pt}
\subsubsection{Learning the informative submanifold with Autoencoders} \label{Sec:AE}
%\vspace{-pt}
An autoencoder is a neural network that is trained to reconstruct its input \cite{Goodfellow-et-al-2016-Book}. The network operates a projection $r$ that can be decomposed in an encoding and a decoding function. The optimal weights are usually obtained by minimizing the mean $\ell_2$ distance between the inputs and their reconstructions. If the dimension of the code is smaller than the dimension of the input (i.e.\ if the autoencoder is  undercomplete), the autoencoder captures the submanifold that represents the best the structure of the input data.  More precisely, we choose to use a two-layer network with a sigmoid activation in the hidden layer: $r(x) = W_{dec}\sigma(W_{enc}x). $ Figure~\ref{fig:AE} shows a general scheme of such an  autoencoder.

\begin{figure}[t]
\centering
\includegraphics[width = \columnwidth]{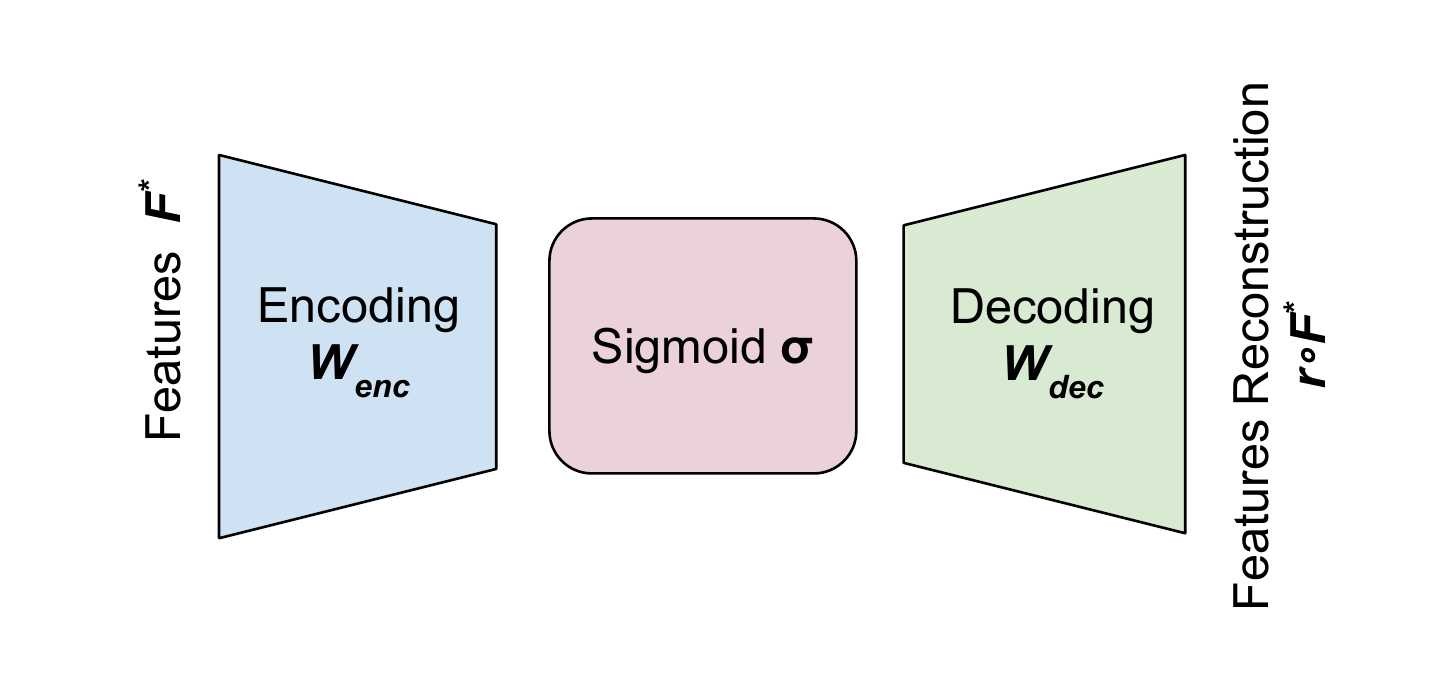}
\caption{Scheme of an undercomplete autoencoder trained to capture the important features submanifold.}
\label{fig:AE}
 \vspace*{-0.2cm} 
\end{figure}

Here, our aim is to obtain through the autoencoder a sub-manifold that captures the information that is not only important to reconstruct the features (output of the feature extraction operator $F^*$) of the first task, but also important for the task operator ($T^*_1\circ T^*$). The objective is:
\begin{align}
\arg\min_{r} \mathbb{E}_{(\mathcal{X}^{(1)},\mathcal{Y}^{(1)})}[ &\lambda\|r(F^*(\mathcal{X}^{(1)}))- F^*(\mathcal{X}^{(1)})\|_2  \label{eq:LossAE}\\ 
&+ \ell(T_1^*\circ T^*(r(F^*(\mathcal{X}^{(1)}))),\mathcal{Y}^{(1)}) ],
\nonumber
\end{align}
where $\ell$ is the loss function used to train the model on the first task data.  $\lambda$ is a hyper-parameter that controls the compromise between the two terms in this loss. 
In this manner, the autoencoder represents the variations that are needed to reconstruct the input and at the same time contain the information that is required by the task operator.
\vspace{-6pt}
\subsubsection{Representation control with separate task operators} \label{Sec:intuition}

%To understand the motivation behind the use of autoencoders, 
To explain how we use these autoencoders,
we start with the simple case where there is no task operator shared among all tasks (i.e., $T = \emptyset$). The model is then composed of a common feature extractor $F$, and a task specific operator for each task $T_t$. 
%In a lifelong learning scenario, 
%we can imagine a solution where 
Each time a new task is presented to the model, the corresponding task operator is then optimized. However, in order to achieve the aim of lifelong learning, we want to also update the feature extractor without damaging the performance of the model on the previously seen tasks. 

% \begin{figure}[t]
% \centering
% \includegraphics[width = 0.9\columnwidth]{}
% \caption{A model where there is no common task operator}
% \label{fig:No_sharing}
% \end{figure}

In a two task scenario, after training the first task, we have $T_1^*$ and $F^*$  optimized for that task. Then, we train an undercomplete autoencoder using $F^*(X_i^{(1)})$ minimizing the empirical risk corresponding to~\eqref{eq:LossAE}. The optimal performance for the first task, knowing that the operator $T_1$ is kept equal to $T_1^*$, is obtained with $F$ equal to $F^*$.

Nevertheless, preventing $F$ from changing will lead to suboptimal performance on the second task. The idea here is to keep only the projection of $F$ into the manifold represented by the autoencoder ($r\circ~F$) unchanged. %Indeed, $r$ represents the knowledge that is needed for the good performance of $T_1^*$. %Indeed, as the autoencoder is learned in order to minimize the classification loss using the reconstructions $r\circ~F^*$ rather than $F^*$,  and the obtained autoencoder 
The second term of Eq.~\eqref{eq:LossAE} explicitly enforces $r$ to represent the submanifold needed for good performance on task 1. Thus, controlling the distance between $r\circ F$ and $r\circ F^*$ will preserve the necessary information for task 1. From the undercompleteness of the encoder, $r$ projects the features into a lower dimensional manifold, and by controlling only the distance between the reconstructions, we give the features flexibility to adapt to the second task variations (cf.\ Figure~\ref{fig:preserve_feat}).
%Indeed, as this autoencoder captures the submanifold of the features that is crucial to the task operator $T_1$, preserving the projection of $F$ would prevent the performance of the model on the first task from dropping. Moreover, by preserving only a projection of the features into a lower dimensional submanifold, we give the feature extractor the flexibility to adjust to the second task environment. Thus, the autoencoder can be used to realize the mechanism described in Figure~\ref{fig:preserve_feat}.

\subsubsection{Representation control with shared task operator}\label{Sec:Shared}

We now consider the model presented in Figure~\ref{fig:Global_model} where a part of the task operator is shared among the tasks as in the setting used in LwF~\cite{li2016learning}. This is clearly a preferrable architecture in a lifelong learning setting, as the memory increase when adding a new task is much lower. Our main idea is to start from the loss used in the LwF method and add an additional term coming from the idea presented in Sec.~\ref{Sec:intuition}. Thus, in a two task scenario, in addition to the loss used for the second task, we propose to use two constraints:
\begin{enumerate}
\item The first constraint is the knowledge distillation loss ($\ell_{dist}$) used in~\cite{li2016learning}. If 
$
\hat{\mathcal{Y}} := T_1\circ T\circ F(\mathcal{X}^{(2)}) 
$
and 
$
\mathcal{Y}^* = T^*_1\circ T^*\circ F^*(\mathcal{X}^{(2)}) 
$
then:
\begin{equation}
\ell_{dist}(\hat{\mathcal{Y}} ,\mathcal{Y}^*) = -\langle \mathcal{Z}^*,\log \hat{\mathcal{Z}}\rangle
\end{equation}
where $\log$ is operated entry-wise and
\begin{equation}
\mathcal{Z}^*_i = \frac{\mathcal{Y}_i^{*1/\theta}}{\sum_j\mathcal{Y}_j^{*1/\theta}} \text{ and } \hat{\mathcal{Z}}_i = \frac{\hat{\mathcal{Y}}_i^{1/\theta}}{\sum_j\hat{\mathcal{Y}}_j^{1/\theta}} 
\end{equation}
The application of a high temperature $\theta$ increases the small values of the output and reduces the weight of the high values. This mitigates the influence of the use of different data distributions.
%Rather than constraining $T_1\circ T\circ F(\mathcal{X}^{(2)}) $ to be close to $T_1^*\circ T^*\circ F^*(\mathcal{X}^{(2)})$, we train the network in order to match $T_1\circ T\circ r\circ F(\mathcal{X}^{(2)}) $  with $T_1^*\circ T^*\circ r\circ F^*(\mathcal{X}^{(2)}) $. The main difference is that we rely in this constraint on the features projected into the manifold learned to capture the information important for the first task.
%RA: rather than the raw feature of the second task data..Can you say something on why it would be better . 
\item The second constraint is related to the preservation of the reconstructions of the second task features ($r\circ F^*(\mathcal{X}^{(2)})  $). The goal of this constraint is to keep $r\circ F$ close to $r\circ F^*$ as explained in Sec.~\ref{Sec:intuition}.
\end{enumerate}
For the second constraint, rather than controlling the distance between the reconstructions, we will here constrain the codes $\sigma(W_{enc}\cdot )$. From sub-multiplicity of the Frobenius norm, we have:
\begin{equation}
\|r(x_1) - r(x_2)\|_2 \le \|W_{dec}\|_F\|\sigma(W_{enc}x_1) - \sigma(W_{enc}x_2)\|_2. \nonumber
\end{equation}
The advantage of using the codes is their lower dimension. As the codes or reconstructions need to be recorded before beginning the training on the second task, using the codes will result in a better usage of the memory.

Finally, the objective for the training of the second task is the following: 
\begin{align}
\mathcal{R} &= \mathbb{E}[\ell(T_2\circ T\circ  F(\mathcal{X}^{(2)}), \mathcal{Y}^{(2)}) )\nonumber \\
&+  \ell_{dist}(T_1\circ T\circ F(\mathcal{X}^{(2)}), T_1^*\circ T^*\circ F^*(\mathcal{X}^{(2)})  ) \nonumber  \\
& + \frac{\alpha}{2} \| \sigma(W_{enc}F(\mathcal{X}^{(2)})) - \sigma(W_{enc}F^*(\mathcal{X}^{(2)})) \|_2^2] \label{eq:CodeLoss}.
\end{align}
The choice of the parameter $\alpha$ will be done through model selection. An analysis of this objective is given in the appendix, giving a bound on the difference between $\mathbb{E}[\ell(T_2\circ~T\circ~F(\mathcal{X}^{(2)}),\mathcal{Y}^{(2)}))
+  \ell(T_1\circ T\circ F(\mathcal{X}^{(2)}), T_1^*\circ T^*\circ F^*(\mathcal{X}^{(2)})  )]$ and the statistical risk in a joint-training setting~\eqref{eq:StatRisk}. It shows that~\eqref{eq:CodeLoss} effectively controls this bound.

\subsection{Training procedure}\label{Sec:Proc}
The proposed method in Sec.~\ref{Sec:Shared} generalizes easily to a sequence of tasks. An autoencoder is then trained after each task. Even if the needed memory will grow linearly with the number of tasks, the memory required by an autoencoder is a small fraction of that required by the global model. For example, in the case of AlexNet as a base model, an autoencoder comprises only around 1.5\% of the memory. 

In practice, the empirical risk is minimized: 
\begin{align}
R_N&= \frac{1}{N} \sum_{i=1}^N \Biggl( \ell(T_T\circ T\circ F(X_i^{(\mathcal{T})}), Y_i^{(\mathcal{T})}) \nonumber \\
&+ \sum_{t=1}^{\mathcal{T}-1} \ell_{dist}(T_t\circ T\circ F(X_i^{(\mathcal{T})}), T_t^*\circ T^*\circ  F^*(X_i^{(\mathcal{T})})  ) \nonumber  \\
& +\sum_{t=1}^{\mathcal{T}-1}\frac{\alpha_t}{2} \| \sigma(W_{enc,t}F(X_i^{(\mathcal{T})})) - \sigma(W_{enc,t}F^*(X_i^{(\mathcal{T})})) \|_2^2\Biggr). \label{eq:MultiEmp}
\end{align}
The training is done using stochastic gradient descent (SGD)~\cite{bottou2010large}.
The autoencoder training is also done by SGD but with an adaptive gradient method, AdaDelta~\cite{zeiler2012adadelta} which alleviates the need for setting the learning rates and has nice optimization properties.
Algorithm \ref{algo} shows the main steps of the proposed method.
 \begin{algorithm}[t]
\caption{Encoder based Lifelong Learning}\label{algo}
  \begin{algorithmic}[1]
  \Statex \textbf{{Input}} :
  \Statex $F^*$ shared feature extractor; $T^*$: shared task operator; $\left\{T_t\right\}_{t=1..\mathcal{T}-1}$ previous task operators; 
  \Statex $\left\{W_{enc,t}\right\}_{t=1..\mathcal{T}-1}$ previous task encoders;
  \Statex $(X^{(\mathcal{T})}, Y^{(\mathcal{T})})$ training data and ground truth of the new task $\mathcal{T}$;
  \Statex $\alpha_t$ and $\lambda$ //hyper parameters
  \Statex \textbf{{Initialization}} :
  
	\State $Y_t^* = T_t^*\circ T^*\circ F^*(X^{(\mathcal{T})})$	   //record task targets  
    \State $C_t^* = \sigma(W_{enc,t}F^*(X^{(\mathcal{T})}))$   //record new data codes
  \State $T_\mathcal{T} \leftarrow$ Init($|Y^{(\mathcal{T})}|$) // initialize new task operator
  
  \Statex \textbf{{Training}} :
  \State  $\hat{Y_t} = T_t\circ T\circ F(X^{(\mathcal{T})})$ // task outputs
  \State   $\hat{C_t} = \sigma(W_{enc,t}F(X^{(\mathcal{T})}))$ //current codes
  %\State $ T_t^*, T^*, F^* \leftarrow $ set by minimizing Eq.~\eqref{eq:MultiEmp}
%   \Statex
\State $ T_t^*, T^*, F^* \leftarrow \arg\min_{T_t, T, F}[\ell(\hat{Y_{\mathcal{T}}} , Y^{(\mathcal{T})}) ) $
\Statex $+  \sum_{t=1}^{\mathcal{T}-1} \ell_{dist}(\hat{Y_t}, Y_t^* ) +\sum_{t=1}^{\mathcal{T}-1}\frac{\alpha_t}{2} \| \hat{C_t} -  C_t^*\|^2]$
%\Statex
   \State $(W_{enc,\mathcal{T}},W_{dec,\mathcal{T}}) \leftarrow$ autoencoder( $T_\mathcal{T}^*, T^*, F^*,$
  \Statex $X^{(\mathcal{T})}, Y^{(\mathcal{T})}; \lambda$ ) // minimizes Eq.~\eqref{eq:LossAE}
%   \State  $\left\{W_{enc,t}\right\}_{t=1..\mathcal{T}}$ =$\left\{W_{enc,t}\right\}_{t=1..\mathcal{T}-1}$ + $W_{enc,\mathcal{T}}$

%\lambda\|r(F^*(\mathcal{X}^{(1)}))- F^*(\mathcal{X}^{(1)})\|_2  \label{eq:LossAE}\\ 
%&+ \ell(T_1^*\circ T^*(r(F^*(\mathcal{X}^{(1)}))),\mathcal{Y}^{(1)}) ]
  \end{algorithmic}
 
  \end{algorithm}

\section{Experiments}\label{Sec:Exp}
\begin{table*}[ht]
\centering
\tabcolsep=0.11cm
\footnotesize
\begin{tabular}{|l|ll|ll||ll|ll||ll|ll||ll|}
\hline
& \multicolumn{4}{|c|}{ImageNet $\rightarrow$ Scenes} &   \multicolumn{4}{c|}{ImageNet $\rightarrow$ Birds} & \multicolumn{4}{c|}{ImageNet $\rightarrow$ Flowers} & \multicolumn{2}{c|}{Average loss} \\ \hline
& \multicolumn{2}{c|}{Acc. on Task1} & \multicolumn{2}{c|}{Acc. on Task2} & \multicolumn{2}{c|}{Acc. on Task1} & \multicolumn{2}{c|}{Acc. on Task2} & \multicolumn{2}{c|}{Acc. on Task1} & \multicolumn{2}{c|}{Acc. on Task2} & Task 1 & Task 2 \\ \hline
Finetuning& 48.0\% &(-9\%)         & 65.0\% &(ref)	&  41.3\% & (-15.7\%) 	&  59.0\% & (ref)	& 50.8\% & (-6.2\%) & 86.4\% & (ref) & -10.3\% & (ref) \\ 
Feature extraction& 57.0\% & (ref) & 60.6\% &(-4.4\%)	&   57.0\% & (ref) 		& 51.6\% & (-7.4\%) & 57.0\% & (ref)	& 84.6\% & (-1.8\%) & (ref) & -4.5\%\\ 
LwF  & 55.4\% &(-1.6\%)              & 65.0\% &(-0\%)	&  54.4\% & (-2.6\%) 	&  58.9\% & (-0.1\%)& 55.6\% & (-1.4\%) & 85.9\% & (-0.5\%) & -1.9\% & -0.2\%\\ 
{\bf Ours} & 56.3\% & (-0.7\%)       & 64.9\% &(-0.1\%) &  55.3\% & (-1.7\%) 	&  58.2\% & (-0.8\%)& 56.5\% & (-0.5\%) & 86.2\% & (-0.2\%) & -1.0\% & -0.4 \% \\ 
{\bf Ours} - separate FCs& 57.0\% & (-0\%) & 65.9\% &(+0.9\%)&  57.0\% & (-0\%)	&  57.7\% & (-1.3\%)& 56.5\% & (-0.5\%) & 86.4\% & (-0\%)   & -0.2\% & -0.1 \% \\ \hline
% Lwf + code loss & 56.3\%   64.93\% &  56.3 &-\\
% Lwf loss  & 55.4\% 64.96\% & 54.36   58.6&-\\
% Feature extraction & 57.0\% &-\% &  - &-\\
% Finetuning & -\% &-\% &  - &-\\ \hline  \hline  
\end{tabular}
\caption{Classification accuracy for the Two Tasks scenario starting from ImageNet. For the first task, the reference performance is given by Feature extraction. For the second task, we consider Finetuning as the reference as it is the best that can be achieved by one task alone.}
\label{tab:res_twotasks_imagenet}
\end{table*}

We  compare our method against the state-of-the-art and several baselines on image classification tasks.
% % %This comparison is performed with respect to pairs of tasks and in a sequential scenario. 
% % Second, we 
%and analyze the behavior of our method.
%with respect to the proposed autoencoder training scheme and its effect on preserving the task important features.
%
% \subsection{Comparison with the state-of-the-art}\label{Sec:state_of_the_art}
% To compare our method with the baselines and the state-of-the-art,
We consider sets of 2 and 3 tasks learned sequentially, in two settings: 1) when the first task is a large dataset, and 2) when the first task is a small dataset.

\paragraph{Architecture}
We experiment with AlexNet \cite{krizhevsky2012imagenet} as our network architecture due to its widespread use and similarity to other popular architectures. The feature extraction block $F$ corresponds to the convolutional layers. By default, the shared task operator $T$ corresponds to all but the last fully connected layers (i.e., $fc6$ and $fc7$), while the task-specific part $T_i$ contains the last classification layer ($fc8$). During the training, we used an $\alpha$ of $10^{-3}$ for ImageNet and $10^{-2}$ for the rest of the tasks. Note that this parameter sets the trade off between the allowed forgetting on the previous task and the performance on the new task.

For the autoencoders, we use a very shallow architecture, to keep their memory footprint low. Both the encoding as well as the decoding consist of a single fully connected layer, with a sigmoid as non-linearity in between. The dimensionality of the codes is 100 for all datasets, except for ImageNet where we use a code size of 300. The size of the autoencoder is 3MB, compared to 250MB for the size of the network model. The training of the autoencoders is done using AdaDelta as explained in Sec.~\ref{Sec:Proc}. During training of the autoencoders, we use a hyperparameter $\lambda$ (cf. Eq.~\eqref{eq:LossAE}) to find a compromise between the reconstruction error and the classification error. $\lambda$ is tuned manually in order to allow the convergence of the code loss and the classification loss on the training data. Figure~\ref{fig:AE_loss} shows the evolution of these losses for the training and validation samples of ImageNet during the training of an autoencoder based on the $conv5$ features extracted with AlexNet. In our experiments, $\lambda$ is set to $10^{-6}$ in all cases.

\begin{figure}[t]
\centering
 \vspace*{-0.7cm} 
\includegraphics[width=0.4\textwidth,height=4cm]{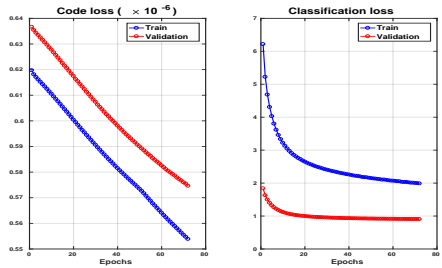}
  \caption{Training of AlexNet based autoencoder for ImageNet - The objective makes the code loss \emph{and} the classification loss decrease. The training is stopped when we observe a convergence of the classification loss.  }
   
  \label{fig:AE_loss}
  \vspace*{-0.5cm} 
\end{figure}

\paragraph{Datasets}\label{Sec:dataset}
We use multiple datasets of moderate size: MIT \textit{Scenes} \cite{quattoni2009recognizing} for indoor scene classification (5,360 samples), Caltech-UCSD \textit{Birds}~\cite{WelinderEtal2010} for fine-grained bird classification (5,994 samples) and Oxford \textit{Flowers}~\cite{Nilsback08} for fine-grained flower classification (2,040 samples). We excluded Pascal VOC as it is very similar to ImageNet, with subcategories of some of the VOC labels corresponding to ImageNet classes. These datasets were also used in both~\cite{aljundi2016expert} and~\cite{li2016learning}.

For the scenario based on a large initial dataset, we start from ImageNet (LSVRC 2012 subset)~\cite{russakovsky2015imagenet}, which has more than 1 million training images.  For the small dataset scenario, we start from Oxford \textit{Flowers}, which has only 2,040 training and validation samples. 

The reported results are obtained with respect to the test sets of Scenes, Birds and Flowers, and on the validation set of ImageNet. 
As in LwF~\cite{li2016learning}, we need to record the targets for the old tasks before starting the training procedure for a new task. Here, we perform an offline augmentation with 10 variants of each sample (different crops and flips). This setting differs slightly from what has been done in~\cite{li2016learning}, which explains the higher performance on the individual tasks in our experiments.  We therefore compare against a stronger baseline than the accuracies reported in~\cite{li2016learning}. 
%Places 365 dataset\cite{zhou2016places}  sup
\paragraph{Compared Methods}
We compare our method %Encoder based Lifelong Learning 
(\textbf{Ours}) with Learning without Forgetting (\textbf{LwF})~\cite{li2016learning}, which represents the current state-of-the-art. 
Additionally, we consider two baselines: \textbf{Finetuning}, where each model (incl. $F$ and $T$) is learned for the new task using the previous task model as initialization, and \textbf{Feature extraction}, where the weights of the previous task model ($F$ and $T$) are fixed and only the classification layer ($T_t$) is learned for each new task. Further, we also report results for a variant of our method, 
% \textbf{Joint Training} considered having access to the data from all the task simultaneously. Each time a new task arrives, the previous model is re-learned using the new task data and data from all the previous tasks. Note that this baseline stands as an upper bound for our setup.\\
\textbf{Ours - separate FCs} where we only share the representation layers ($F$) while each task has its own fully connected layers (i.e., $T = \emptyset$ and $T_i = \{fc6-fc7-fc8\}$). This variant aims at finding a universal representation for the current sequence of tasks while allowing each task to have its own fully connected layers. With less sharing, the risk of forgetting is reduced, at the cost of a higher memory consumption and less regularization for new tasks. Note that in this case the task autoencoders can be used at test time to activate only the fully connected layers of the task that a test sample belongs to, in a similar manner to what was done in \cite{aljundi2016expert}.

\paragraph{Setups}\label{sec:setup}
We consider sequences of 2 and 3 tasks. In the {\bf Two Tasks} setup, we are given a model trained on one previously seen task and then add a second task to learn. This follows the experimental setup of LwF~\cite{li2016learning}. In their work, all the tested scenarios start from a large dataset, ImageNet. Here we also study the effect of starting from a small dataset, Flowers.\footnote{Due to the small size of the Flowers dataset, we use a network pretrained on ImageNet as initialization for training the first task model. The main difference hence lies in the fact that in this case we do not care about forgetting ImageNet.}
Further, we also consider a setup, involving {\bf Three  Tasks}. First, we use a sequence of tasks starting from ImageNet, i.e.~ImageNet $\rightarrow$ Scenes $\rightarrow$ Birds. Additionally, we consider Flowers as a first task in the sequence Flowers $\rightarrow$ Scenes $\rightarrow$ Birds. Note that this is different from what was conducted  in \cite{li2016learning} where the sequences  were only composed of  splits of  one dataset  i.e.~one task overall.

% Results using Alexnet - Conv5 : Table~\ref{res_conv5}. Each Task has its own fully connected layers
% \begin{table}
% \begin{tabular}{lllll}
% \toprule
% & ImageNet & Scenes & CUB & Flowers \\
% \midrule
% One task & 57.09\% &64.4\% & 55.5\% & 85.52\%\\
% \midrule
% Two tasks & 57.11\% &65.15\% & - & -\\
% &56.93\% & - & 56.2\% & -\\
% &57.05\% & - & - & 86.19\%\\
% Seq & 56.61\% &64.47\% & 55.51\% & -\\
% \bottomrule
% \end{tabular}
% \caption{Results with Conv5}
% \label{res_conv5}
% \end{table}

\begin{table*}[ht]
\centering
\tabcolsep=0.11cm
\small
\begin{tabular}{|l|ll|ll||ll|ll||ll|}
\hline
%I will have the number on flower after cub for finetuning in 5 mins 66.6
& \multicolumn{4}{|c|}{Flowers $\rightarrow$ Scenes} &   \multicolumn{4}{c|}{Flowers $\rightarrow$ Birds}  & \multicolumn{2}{c|}{Average loss}   \\ \hline
& \multicolumn{2}{c|}{Acc. on Task1} & \multicolumn{2}{c|}{Acc. on Task2} & \multicolumn{2}{c|}{Acc. on Task1} & \multicolumn{2}{c|}{Acc. on Task2} &  Task 1 &  Task 2 \\ \hline
Finetuning  		& 61.6\% & (-24.8\%)	& 63.9\% & (ref) 	& 66.6\%      & (-19.8\%)		 & 57.5\% & (ref) &-22.3\% &(ref)\\ 
Feature extraction  & 86.4\% & (ref) 		& 59.6\% & (-4.3\%) &  86.4\% & (ref) & 48.6\% & (-8.9\%) &(ref) &-6.6\%\\ 
LwF  				& 83.7\% & (-2.7\%) 	& 62.2\% & (-1.7\%)	&  82.0 \%& (-4.4\%) &  52.2 \%& (-5.3\%)  & -3.6\%&-3.5\%\\ 
{\bf Ours} 			& 84.9\% & (-1.5\%) 	& 62.3\% & (-1.6\%) &  83.0\% & (-3.4\%) & 52.0 \% & (-5.5\%) & -2.4\%&-3.5\%\\ 
{\bf Ours} - separate FCs & 86.4\% & (-0\%) 	& 63.0\% & (-0.9\%) &  85.4\% & (-1.0\%) & 55.1\% & (-2.4\%) &-0.5\%&-1.6\%\\ \hline
% Lwf + code loss & 56.3\%   64.93\% &  56.3 &-\\
% Lwf loss  & 55.4\% 64.96\% & 54.36   58.6&-\\
% Feature extraction & 57.0\% &-\% &  - &-\\
% Finetuning & -\% &-\% &  - &-\\ \hline  \hline  
\end{tabular}
\caption{Classification accuracy for the Two Tasks scenario starting from Flowers. For the first task, the reference performance is given by Feature extraction. For the second task, we consider Finetuning as reference as it is the best that can be achieved by one task alone.}
\label{tab:res_twotasks_flower}
\vspace*{-0.1cm} 
\end{table*}

\begin{table}[h]
\centering
\tabcolsep=0.11cm
\small
\begin{tabular}{|l|lll|c|}
\hline
& ImageNet & Scenes & Birds & Average Acc.   \\

  \hline
Finetuning& 37.5\% &45.6\% & \textbf{ 58.1}\% &47.2\%\\  
LwF & 53.3\% & 63.5\% & 57.2 \%& 58.0\% \\
{\bf Ours} & \textbf{54.9}\% &\textbf{64.7}\% & 56.9\% &\textbf{58.8}\% \\ 
\hline 
\end{tabular}
\caption{Classification accuracy for the Three Tasks scenario starting from ImageNet. \textbf{Ours} achieves the best trade off between the tasks in the sequence with less forgetting to the previous tasks.}
\label{tab:res_seq_imgnet}
\vspace*{-0.1cm} 
\end{table}

\begin{table}[h]
\centering
\tabcolsep=0.11cm
\small
\begin{tabular}{|l|lll|c|}
\hline
& Flowers & Scenes & Birds&Average Acc.  \\
\hline
Finetuning &51.2\% &48.1\% &{\bf 58.5\%} &51.6\%\\ 
 
LwF    & 81.1\% &59.1\% & 52.3\% &64.1\% \\
{\bf Ours} & {\bf 82.8\%} &{\bf 61.2 \%} &  51.2\% & {\bf 65.0\%} \\ \hline 

\end{tabular}
\caption{Classification accuracy for the Three Tasks scenario starting from Flowers. \textbf{Ours} achieves the best trade off between the tasks in the sequence with less forgetting to the previous tasks.}
\label{tab:res_seq_flower}

\end{table}
% %===============SEQUENTIAL===================
% \begin{table}[h]
% \centering
% \tabcolsep=0.11cm
% \small
% \begin{tabular}{|l|lll|l|}
% \hline
% & ImageNet & Scenes & Birds   & avg \\
% \hline
%  Finetuning & 48.0\% &65.0\% &  - & \\ 
% & 37.9\% &45.6\% &  58.1 &47.2\% \\ 
%  \hline 
% LwF  & 55.4\% &65.0\% & - &\\
%  & 53.3\% &63.5\% & 57.2 &58.0\%\\
%  \hline

% {\bf Ours}& 56.3\% &64.9\% &  - &\\
% & 54.9\% &64.7\% & 56.9 &58.8\% \\ 

% \hline 
% \end{tabular}
% \caption{Classification accuracy for the sequential scenario starting from ImageNet}
% \label{tab:res_seq_imgnet}
% \end{table}

% \begin{table}[h]
% \centering
% \tabcolsep=0.11cm
% \small
% \begin{tabular}{|l|lll|l|}
% \hline
% & Flowers & Scenes & Birds &avg  \\
% \hline
% {\bf Ours} & 84.9\% &62.3\% &  -& 73.6\\
% & 82.1\% &61.9\% &  51.2\% &\\ \hline  %These numbers will change
% LwF  & 83.7\% &62.1\% & - &72.9\\
%   & 81.1\% &59.1\% & 52.3\%  &\\\hline
% Finetuning & 61.5\% &64.2\% &  - &\\ 
% &36.4\% &30.4\% & 58.9\% &41.9\%\\ 
% \hline 

% \end{tabular}
% \caption{Classification accuracy for the sequential scenario starting from Flowers}
% \label{tab:res_seq_flower}
% \end{table}
\paragraph{Results}
Table \ref{tab:res_twotasks_imagenet} shows, for the different compared methods, the achieved performance on the Two Tasks scenario with ImageNet as the first task.  While \textbf{Finetuning} is optimal for the second task, it shows the most forgetting of the first task. The performance on the second task is on average comparable for all methods except for \textbf{Feature extraction}. Since the \textbf{Feature extraction} baseline doesn't allow the weights of the model to change and only optimizes the last fully connected layers, its performance on the second task is suboptimal and significantly lower than the other methods. Naturally the performance of the previous task is kept unchanged in this case.
\textbf{Ours - separate FCs} shows  the best compromise between the two tasks. The performance of the first task is highly preserved and at the same time the performance of the second task is comparable or better to the methods with shared FCs. This variant of our  method has a higher capacity as it allocates separate fully connected layers for each task, yet its memory consumption increases more rapidly as tasks are added. Our method with a complete shared model \textbf{Ours} systematically outperforms the \textbf{LwF} method on the previous task and on average achieves a similar performance on the second task. 

When we start from a smaller dataset, Flowers, the same trends can be observed, but with larger differences in accuracy 
%The differences become bigger when we start from a smaller dataset, Flowers - yet the same trends can be observed 
(Table~\ref{tab:res_twotasks_flower}).
The performance on the second task is lower than that achieved with ImageNet as a starting point for all the compared methods. This is explained by the fact that the representation obtained from ImageNet is more meaningful for the different tasks than what has been finetuned for Flowers. Differently from the ImageNet starting case, \textbf{Ours - separate FCs}  achieves a considerably better performance on the second task than \textbf{Ours} and  \textbf{LwF} while preserving the previous task performance.  \textbf{Finetuning} shows the best performance on the second task while suffering from severe forgetting on the previous task. Indeed the pair of tasks here is of a different distribution and finding a compromise between the two is a challenging problem.
As in the previous case \textbf{Ours} reduces the forgetting of  \textbf{LwF} while achieving a similar average performance on the second task.

Overall, the \textbf{Ours-separate FCs} achieves the best performance on the different pairs of tasks. However, it requires allocating seprate fully connected layers for each task which requires a lot of memory. Thus, for the sequential experiments we focus on the shared model scenario.
% \subsection{Autoencoder Training}\label{Sec:Auto_Exp}
% \begin{figure}[t]
% \centering
% % \vspace*{-0.7cm} 
% \includegraphics[width=0.4\textwidth]{}
%   \caption{Training of VGG based autoencoder for ImageNet.}
   
%   \label{fig:system}
%   \vspace*{-0.5cm} 
% \end{figure}
In Table \ref{tab:res_seq_imgnet} we report the performance achieved by \textbf{Ours}, \textbf{LwF} and \textbf{Finetuning} for the sequence of  ImageNet $\rightarrow$ Scenes $\rightarrow$ Birds.  As expected the \textbf{Finetuning} baseline suffers from severe forgetting on the previous tasks. The performance on  ImageNet (the first task) drops from 57\% to 37.5\% after finetuning on the third task. As this baseline does not consider the previous tasks in its training procedure it has the advantage of achieving the best performance on the last task in the sequence, as expected.  

\textbf{Ours}  continually  reduces forgetting compared to  \textbf{LwF} on the previous tasks while showing a comparable performance on the new task in the sequence. For example, \textbf{Ours} achieves 54.9\% on ImageNet compared to 53.3\% by LwF. Similar conclusions can be drawn regarding the sequential scenario starting from Flowers as reported in Table~\ref{tab:res_seq_flower}.

\begin{figure}[t]
\centering
\includegraphics[width=0.28\textwidth]{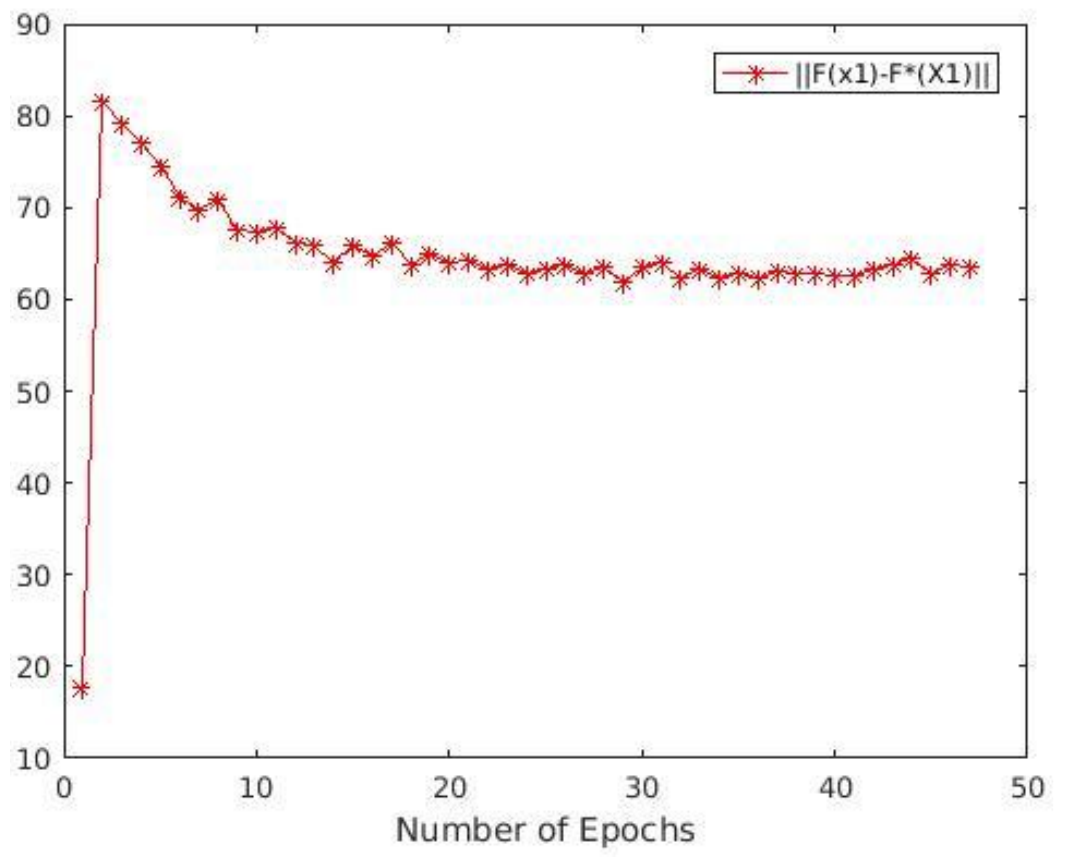}
  \caption{Distance between the representation obtained during training of the Birds task given  Flowers samples and the original representation for Flowers network. Starting from the Scenes network trained using our method after Flowers.}
   
  \label{fig:dis}
  \vspace*{-0.5cm} 
\end{figure}

\paragraph{Behavior Analysis}\label{sec:analysis}
To examine the effect of our representation control over the learning process, we perform an analysis  on the distance between the representation obtained over the learning procedure and the one that is optimal for the first task. We use Flower $\rightarrow$ Scenes$\rightarrow$ Birds as a test case and compute the distance for each epoch between the current features of the Flowers dataset $F(\mathcal{X}^1)$ and that obtained by the initial Flowers network $F^*(\mathcal{X}^1)$, as shown in Figure~\ref{fig:dis}.
In the beginning  of the training, the leading term in Eq.~\eqref{eq:MultiEmp} is the loss related to the new task $\mathcal{T}$. Thus, in the first stage, the model is driven towards optimizing the performance of the most recent task. This results in a quick loss of performance for the previous tasks, and an increase in the other loss terms of the objective. Then, the second stage kicks in. %when the forgetting effect becomes higher than the loss of the task $\mathcal{T}$. 
In this stage, all the terms of Eq.~\eqref{eq:MultiEmp} contribute and the model is pushed towards recovering its performance for the previous tasks while continuing improving for the most recent one. Gradually, $F(\mathcal{X}^1)$ gets again closer to $F^*(\mathcal{X}^1)$, until a new equilibrium is reached.

\section{Conclusions and future work} \label{Sec:Conclusion}
Strategies for efficient lifelong learning is still an open research problem. In this work we tackled the problem of learning a sequence of tasks using only the data from the most recent environment, aiming at obtaining a reasonable performance on the whole sequence. Existing works consider solutions to preserve the knowledge of the previous tasks either by keeping the corresponding system predictions unchanged during training of the new task, or by keeping the model parameters in a neighborhood of the sequence of the previous optimal weights. While the first suffers from the difference in the task distributions, the second needs to store a large number of parameters. %that grows with the model size and the length of the sequence. 

The solution presented here reduces forgetting of earlier tasks by controlling the distance between the representations of the different tasks. Rather than preserving the optimal weights of the previous tasks, we propose an alternative that preserves the features that are crucial for the performance in the corresponding environments. Undercomplete autoencoders are used to learn the submanifold that represents these important features. The method is tested on image classification problems, in sequences of two or three tasks, starting either from a small or a large dataset. An improvement in performance over the state-of-the-art is achieved in all the tested scenarios. Especially, we showed a better preservation of the old tasks.

Despite the demonstrated improvements, this work also identifies possible further developments. A direction that is worth exploring is to use the autoencoders as data generators rather than relying on the new data. This would give a stronger solution in the situation where the new data does not represent previous distributions well.
\newline 
{\textbf{Acknowledgment:}
 The second equal author's PhD is funded by an FWO scholarship. This work is partially funded by Internal Funds KU Leuven, FP7-MC-CIG
334380, and an Amazon Academic Research Award.}

% \subsection{Trend of $\| F(\mathcal{X}^{(1)}) - F^*(\mathcal{X}^{(1)}) \|$}\label{Sec:ExpTrend}
% In Sec.~\ref{Sec:Analysis}, we observed that $\| F(\mathcal{X}^{(1)}) - F^*(\mathcal{X}^{(1)}) \|$ is either decreasing or increasing then decreasing during the training on the second task data. Figures~\ref{Fig:ChackDistFlowers} and \ref{Fig:ChackDistImagenet} show this trend for Flowers and Imagenet (first tasks) while training on Scenes (second task).

% \begin{figure}[t]
% \centering
% \includegraphics[width = 0.4\textwidth]{}
% \caption{$\| F(\mathcal{X}^{(1)}) - F^*(\mathcal{X}^{(1)}) \|$ for Flowers while training on Scenes after Flowers}
% \label{Fig:ChackDistFlowers}
% \end{figure}

% \begin{figure}[t]
% \centering
% \includegraphics[width = 0.4\textwidth]{}
% \caption{$\| F(\mathcal{X}^{(1)}) - F^*(\mathcal{X}^{(1)}) \|$ for Imagenet while training on Scenes after Imagenet}
% \label{Fig:ChackDistImagenet}
% \end{figure}

{\small
\bibliographystyle{ieee}
\bibliography{egbib}
}
\newpage
\appendix
\section{Analysis of the main method}
In this section, we present a detailed analysis of the main contribution of the paper and demonstrate its theoretical grounding. 

The following derivations are based on the hypothesis that all the functions involved in the model training are Lipschitz continuous. %This is the case, for example for Lipschitz continuous activation functions.
The most commonly used functions such as sigmoid, ReLU or linear functions satisfy this condition. It is also the case for the most commonly used loss functions, e.g.\ softmax, logistic, or hinge losses.  Note that squared and exponential losses are not Lipschitz continuous, but this is not a significant limitation as such losses are less frequently applied in practice due to their sensitivity to label noise.
\begin{definition}
We say that a function $f$ is Lipschitz continuous if and only if there exist a constant $K$ such that:
\begin{equation*}
\forall (x,y), \|f(x) - f(y)\| \le K \|x-y\|
\end{equation*}
\end{definition}

\subsection{Relation between Encoder based Lifelong Learning and joint training}
In the sequel, we use the same notation as in the main paper.
In a two-task scenario, we propose to use the following objective to train a network using only the data from the second task:
\begin{align}
\mathcal{R} &= \mathbb{E}[\ell(T_2\circ T\circ  F(\mathcal{X}^{(2)}), \mathcal{Y}^{(2)}) )\nonumber \\
&+  \ell_{dist}(T_1\circ T\circ F(\mathcal{X}^{(2)}), T_1^*\circ T^*\circ F^*(\mathcal{X}^{(2)})  ) \nonumber  \\
& + \frac{\alpha}{2} \| \sigma(W_{enc}F(\mathcal{X}^{(2)})) - \sigma(W_{enc}F^*(\mathcal{X}^{(2)})) \|_2^2] \label{eq:CodeLoss}.
\end{align}
In the ideal case where we can keep in memory data from the first task, the best solution is to train the network jointly by minimizing the following objective:
\begin{equation}
\mathbb{E}[\ell(T_2\circ T\circ  F(\mathcal{X}^{(2)}), \mathcal{Y}^{(2)}) )] + \mathbb{E}[\ell(T_1\circ T\circ  F(\mathcal{X}^{(1)}), \mathcal{Y}^{(1)}) ) ]
\label{eq:StatRisk}
\end{equation}

\begin{proposition}
%  Minimizing the objective \eqref{eq:CodeLoss} is effectiveley controlling the difference between \eqref{eq:StatRisk} and $\mathbb{E}[\ell(T_2\circ~T\circ~F(\mathcal{X}^{(2)}),\mathcal{Y}^{(2)})) +  \ell(T_1\circ T\circ F(\mathcal{X}^{(2)}), T_1^*\circ T^*\circ F^*(\mathcal{X}^{(2)})  )]$.
  The difference between \eqref{eq:StatRisk} and $\mathbb{E}[\ell(T_2\circ~T\circ~F(\mathcal{X}^{(2)}),\mathcal{Y}^{(2)}))
    +  \ell(T_1\circ T\circ F(\mathcal{X}^{(2)}), T_1^*\circ T^*\circ F^*(\mathcal{X}^{(2)})  )]$ can be controlled by the independent minimization of the knowledge distillation loss and five terms: \eqref{eq:X1Feat}, \eqref{eq:AEloss}, \eqref{eq:rec1-rec2}, \eqref{eq:rec2*-rec2}, and \eqref{eq:recErrX2}.
\end{proposition}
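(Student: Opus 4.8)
The plan is to first cancel the term $\mathbb{E}[\ell(T_2\circ T\circ F(\mathcal{X}^{(2)}),\mathcal{Y}^{(2)})]$, which occurs on both sides, so that the quantity to be controlled reduces to
\[
\Delta := \Bigl| \mathbb{E}[\ell(T_1\circ T\circ F(\mathcal{X}^{(1)}),\mathcal{Y}^{(1)})] - \mathbb{E}[\ell(T_1\circ T\circ F(\mathcal{X}^{(2)}), T_1^*\circ T^*\circ F^*(\mathcal{X}^{(2)}))] \Bigr| .
\]
Because $\ell$ is Lipschitz in both arguments, $|\ell(a,b)-\ell(a',b')|\le K_\ell(\|a-a'\|+\|b-b'\|)$, so bounding $\Delta$ reduces to bounding expected norms of differences of network outputs; and since $T_1,T,T_1^*,T^*$ are all Lipschitz, it suffices to control those differences at the level of the feature extractor and then pull the accumulated Lipschitz constants out in front. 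The core of the argument is then one telescoping chain in which each intermediate quantity differs from the next by exactly one of the six quantities named in the statement.

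Concretely, I would build the chain so that a current feature vector is always compared against its autoencoder reconstruction --- which is the whole point of the construction. On the prediction side I route
\[
F(\mathcal{X}^{(2)}) \leadsto r(F(\mathcal{X}^{(2)})) \leadsto r(F^*(\mathcal{X}^{(2)})) \leadsto r(F^*(\mathcal{X}^{(1)})) \leadsto F^*(\mathcal{X}^{(1)})\leadsto F(\mathcal{X}^{(1)}),
\]
and on the target side I compare $T_1^*\circ T^*\circ F^*(\mathcal{X}^{(2)})$ with $\mathcal{Y}^{(1)}$ by passing through $T_1^*\circ T^*(r(F^*(\mathcal{X}^{(1)})))$ and through the stored prediction $T_1\circ T\circ F(\mathcal{X}^{(2)})$. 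Reading off the successive rungs: \begin{inparaenum}[(i)] \item swapping $F$ for $r\circ F$ on $\mathcal{X}^{(2)}$ and then $r\circ F^*$ for $F^*$ on $\mathcal{X}^{(2)}$ costs the reconstruction error of the second-task features; \item replacing $r(F(\mathcal{X}^{(2)}))$ by $r(F^*(\mathcal{X}^{(2)}))$ costs $\|W_{dec}\|_F\,\|\sigma(W_{enc}F(\mathcal{X}^{(2)}))-\sigma(W_{enc}F^*(\mathcal{X}^{(2)}))\|$ by sub-multiplicativity of the Frobenius norm, i.e.\ exactly the third term of the objective~\eqref{eq:CodeLoss}; \item moving from $\mathcal{X}^{(2)}$ to $\mathcal{X}^{(1)}$ inside $r\circ F^*$ costs the distance between the two reconstruction distributions, which --- crucially --- lives in the low-dimensional code manifold rather than in feature space; \item undoing the reconstruction on $\mathcal{X}^{(1)}$ and passing from $F^*$ to $F$ on $\mathcal{X}^{(1)}$ costs the first-task reconstruction error (the $\ell_2$ part of~\eqref{eq:LossAE}) together with the feature-drift term; \item the target side contributes the classification part of~\eqref{eq:LossAE}, which makes $T_1^*\circ T^*(r(F^*(\mathcal{X}^{(1)})))$ close to $\mathcal{Y}^{(1)}$, while the gap between the current prediction and the stored one on $\mathcal{X}^{(2)}$ is the knowledge distillation loss itself. \end{inparaenum} Summing the five contributions, multiplying through by the Lipschitz constants of $\ell$ and of the task operators, and adding the distillation term yields the claimed bound on $\Delta$.

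The step I expect to be the real obstacle --- and the one that carries the conceptual content --- is (iii): an expectation under $\mathcal{X}^{(1)}$ cannot be compared rung by rung with one under $\mathcal{X}^{(2)}$ unless one fixes a coupling of the two random variables, exactly as in the preliminary (commented-out) proposition that posited $\|\mathcal{X}^{(1)}-\mathcal{X}^{(2)}\|\le\eta$. The content of the proposition is precisely that, once the autoencoder is inserted into the chain, this unavoidable distributional term is measured only through $r$, hence only on the compressed submanifold, so a modest amount of relatedness between the two task distributions already suffices. A secondary technical nuisance is that~\eqref{eq:LossAE} controls an $\ell_2$ reconstruction error and a value of $\ell$, whereas the chain needs plain norm differences of the arguments of $\ell$; bridging this requires the mild extra assumption that $\ell$ also lower-bounds the distance of its arguments up to a constant (as holds, e.g., for the logistic and hinge losses), which I would record explicitly before assembling the final inequality.
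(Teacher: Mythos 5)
Correct, and essentially the paper's own argument: after cancelling the second-task term you apply Lipschitz continuity and telescope $\|F(\mathcal{X}^{(1)})-F(\mathcal{X}^{(2)})\|$ through exactly the paper's intermediate points $F^*(\mathcal{X}^{(1)})$, $r\circ F^*(\mathcal{X}^{(1)})$, $r\circ F^*(\mathcal{X}^{(2)})$, $r\circ F(\mathcal{X}^{(2)})$ (just traversed in the opposite direction), recovering \eqref{eq:X1Feat}--\eqref{eq:recErrX2}, with the knowledge distillation loss absorbing the target-side shift, and your remark about needing a coupling of $\mathcal{X}^{(1)}$ and $\mathcal{X}^{(2)}$ is a fair observation about an implicit step the paper also takes. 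The only real deviation is your target-side detour through $T_1^*\circ T^*(r(F^*(\mathcal{X}^{(1)})))$, which is what forces your extra assumption that $\ell$ lower-bounds the distance between its arguments; the paper instead passes through $T_1^*\circ T^*\circ F^*(\mathcal{X}^{(1)})$ and treats $\|\mathcal{Y}^{(1)}-T_1^*\circ T^*\circ F^*(\mathcal{X}^{(1)})\|$ as a fixed capacity term it does not claim to control, so that assumption is unnecessary.
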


\begin{proof}
From Lipschitz continuity, we deduce that the difference between \eqref{eq:StatRisk} and $\mathbb{E}[\ell(T_2\circ~T\circ~F(\mathcal{X}^{(2)}),\mathcal{Y}^{(2)})) +  \ell(T_1\circ T\circ F(\mathcal{X}^{(2)}), T_1^*\circ T^*\circ F^*(\mathcal{X}^{(2)})  )]$ is bounded, and we can write for any vector norm $\|.\|$: 
\begin{align}
| \ell(T_1\circ T\circ & F(\mathcal{X}^{(1)}), \mathcal{Y}^{(1)}) \nonumber\\
&-  \ell(T_1\circ T\circ F(\mathcal{X}^{(2)}), T_1^*\circ T^*\circ F^*(\mathcal{X}^{(2)})  )| \nonumber\\
&\le K_1 \|F(\mathcal{X}^{(1)})-F(\mathcal{X}^{(2)})\|\label{eq:FeatDist}\\
& + K_2\| \mathcal{Y}^{(1)}- T^*_1\circ T^* \circ F^*(\mathcal{X}^{(2)})\|\label{eq:TargetDist}.
\end{align}
\eqref{eq:TargetDist} is related to the classification error on the first task. Indeed, using the triangle inequality, we can write:
\begin{align}
\| \mathcal{Y}^{(1)}&- T^*_1\circ T^* \circ F^*(\mathcal{X}^{(2)})\| \nonumber\\
&\le \| \mathcal{Y}^{(1)}- T^*_1\circ T^* \circ F^*(\mathcal{X}^{(1)})\| \label{eq:LossT1} \\
&+ \| T^*_1\circ T^* \circ F^*(\mathcal{X}^{(1)})- T^*_1\circ T^* \circ F^*(\mathcal{X}^{(2)})\| \label{eq:DistTarget}
\end{align}
Note that all the terms on the right hand side of the inequality do not change during training of task 2, and thus cannot be controlled during the second training phase. Moreover, \eqref{eq:LossT1} depends only on the capacity of the network and is therefore not influenced by the encoder based lifelong learning scheme. \eqref{eq:DistTarget} is the result of using $\mathcal{X}^{(2)}$ instead of $\mathcal{X}^{(1)}$. In order to reduce the effect of this shift, we use the knowledge distillation loss~\cite{hinton2015distilling} as in the Learning without Forgetting (LwF) method~\cite{li2016learning}.
% TT: I do not understand the above. All terms in (6) remain unchanged during training of the second task. So how can its effect be reduced by using a distillation loss ? 

On the other hand, expression~\eqref{eq:FeatDist} is bounded as well (using again the triangle inequality):
% TT: to make the flow clearer, I would change the above to: "On the other hand, expression (3) is bounded as well"
\begin{align}
\|F(\mathcal{X}^{(1)})-F(\mathcal{X}^{(2)})\| &\le
\| F(\mathcal{X}^{(1)}) - F^*(\mathcal{X}^{(1)}) \| \label{eq:X1Feat} \\
&+ \| F^*(\mathcal{X}^{(1)}) - r\circ F^*(\mathcal{X}^{(1)}) \| \label{eq:AEloss}\\
&+  \| r\circ F^*(\mathcal{X}^{(1)}) - r\circ F^*(\mathcal{X}^{(2)}) \|\label{eq:rec1-rec2} \\
&+  \| r\circ F^*(\mathcal{X}^{(2)}) - r\circ F(\mathcal{X}^{(2)}) \|\label{eq:rec2*-rec2}\\
&+  \| r\circ F(\mathcal{X}^{(2)}) - F(\mathcal{X}^{(2)})\|\label{eq:recErrX2}.
\end{align}

This bound generalizes trivially to the expected value of the loss, which finishes the proof. 
\end{proof}

Analyzing each of these terms individually, we see that our training strategy effectively controls the difference with the joint training risk: 
\begin{itemize}
\item \eqref{eq:AEloss} is minimized through the autoencoder (AE) training and does not change during training of task 2.
\item \eqref{eq:rec1-rec2} does not change during training of task 2. Moreover, $\|~r\circ~F^*(\mathcal{X}^{(1)})~-~r\circ~F^*(\mathcal{X}^{(2)})\|$ measures the distance between two elements of the manifold that the AE represents. As the use of weight decay during training the task model makes the weights small, the projection of data into the feature space is contractive.  Through our experiments, we observed that $r$ is also contractive. As a result, this distance is significantly smaller than $\|\mathcal{X}^{(1)} - \mathcal{X}^{(2)}\|$. The experiment in Sec.~\ref{Sec:ExpEncoderDist} supports this observation.
\item \textbf{\eqref{eq:rec2*-rec2} is controlled during the training thanks to the second constraint we propose to use}.
\item \eqref{eq:recErrX2} is the part of the features that we propose to relax in order to give space for the model to adjust to the second task as explained in Sec.~3.3 in the main text. Indeed, if we control this distance, the features are forced to stay in the manifold related to the first task. This will result in a stronger conservation of the first task performance, but the training model will face the risk of not being able to converge for the second task.
\item \eqref{eq:X1Feat} is a term that we cannot access during training. However, we observed that this distance is either decreasing or first increasing then decreasing during the training. An explanation of this behavior is that in the beginning $\ell(T_2\circ~T\circ~F(\mathcal{X}^{(2)}),\mathcal{Y}^{(2)}))$ may have a bigger influence on the objective, however, after decreasing the loss for the second task, $ \ell(T_1\circ~T\circ~F(\mathcal{X}^{(2)}),T_1^*\circ~T^*\circ~F^*(\mathcal{X}^{(2)}))$ and $\|\sigma(W_{enc}F(\mathcal{X}^{(2)}))~-~\sigma(W_{enc}F^*(\mathcal{X}^{(2)}))\|_2$ tend to push $F$ towards $F^*$. Figure 5 in the main text, and the paragraph ``Behavior analysis" support this observation.  
\end{itemize}
This derivation motivates the code distance that we propose to use. It also elucidates the sources of possible divergence from joint-training. 
%EXPERIMENTS
% \subsection{Checking the Reconstructions Targets }\label{Sec:Exp_REC_TARGET}
% \begin{figure}[t]
% \centering
% % \vspace*{-0.7cm} 
% \includegraphics[width=0.4\textwidth]{}
%   \caption{Flowers targets given the reconstructed CUB samples in blue and the raw CUB samples in red.}
   
%   \label{fig:system}
%   \vspace*{-0.5cm} 
% \end{figure}
% \begin{figure}[t]
% \centering
% % \vspace*{-0.7cm} 
% \includegraphics[width=0.4\textwidth]{}
%   \caption{Flowers targets given the reconstructed Scenes samples in blue and the raw Scene samples in red.}
   
%   \label{fig:system}
%   \vspace*{-0.5cm} 
% \end{figure}
% \begin{figure}[t]
% \centering
% % \vspace*{-0.7cm} 
% \includegraphics[width=0.4\textwidth]{}
%   \caption{Imagenet targets given the reconstructed Scenes samples in blue and the raw Scene samples in red.}
   
%   \label{fig:system}
%   \vspace*{-0.5cm} 
% \end{figure}

\subsection{Empirical study: $F$ and $r$ are contractive}
\label{Sec:ExpEncoderDist}
This experiment aims to show empirically that $\| \mathcal{X}^{(1)} - \mathcal{X}^{(2)} \|$ is significantly larger than $\| r\circ F^*(\mathcal{X}^{(1)}) - r\circ F^*(\mathcal{X}^{(2)}) \|$ using the $\ell_2$ norm. To have empirical evidence for this observation, we conducted the following experiment:
\begin{enumerate}
\item First, we generate random inputs for the model from two different distributions. We use normal distributions with uniformly distributed means and variances. 
\item Then, we compute the features of these inputs (output of $F$).
\item These features are fed to the AE to compute the reconstructions. 
\item The mean squared error (MSE) between the samples, the features and the reconstructions are stored. 
\item This procedure is repeated for several trials. Each time a different pair of distributions is used. Finally, the mean of the obtained MSE is computed.
\end{enumerate}

We repeat this experiment twice. In the first instance, the mean and variance of the Gaussian distributions are generated in a way to have relatively small distances between the samples, and in the second we force the samples to have bigger distance. Tables~\ref{Tab:F_r_contractive_flowers} and ~\ref{Tab:F_r_contractive_imagenet} show the results of this experiment. The numbers in Table~\ref{Tab:F_r_contractive_flowers} are computed using AlexNet fine-tuned on Flowers dataset, and the AE trained as explained in Sec.~3.3 (in the main text) on the features of Flowers extracted using AlexNet convolutional layers after convergence. We report in this table the mean MSE and the error bars obtained with 50 generated samples from 500 different pairs of Gaussian distributions. The numbers in Table~\ref{Tab:F_r_contractive_imagenet} are computed similarly but from the AlexNet and the AE related to ImageNet.  In all cases, the difference between the samples is many orders of magnitude larger than the difference between the reconstructions indicating that the mapping is indeed contractive, and the residual error from this step is minimal.

\begin{table}
\begin{tabular}{llll}
\toprule
 & Samples & Features  & Reconstructions \\
\midrule
Exp. 1 & 229.98 & 33.05 & 0.21 \\
& $\pm 0.065$ & $\pm 0.066$ & $\pm 0.0024$ \\
Exp. 2 & 10176.54  & 106.60  & 0.21 \\
& $\pm 41.17$ & $\pm 0.32$ & $\pm 0.0027$ \\
\bottomrule
\end{tabular}
\caption{Showing that $F$ and $r$ are contractive: Mean MSE of 50 samples  from random Gaussian distribution, of their corresponding features and reconstructions over 500 trails - The main model and the AE are trained on Flowers dataset. As we move from left to right in the table, the entries in the columns decrease significantly verifying that the mappings are contractive.}
\label{Tab:F_r_contractive_flowers}
\end{table}

\begin{table}
\begin{tabular}{llll}
\toprule
 & Samples & Features  & Reconstructions \\
\midrule
Exp. 1 & 230.02 & 9.48 & 1.55 \\
& $\pm 0.063 $ & $\pm 0.022$ & $\pm 0.01$ \\
Exp. 2 & 10173.11 & 61.29  & 2.12\\
& $\pm 41.35 $ & $\pm 0.18$ & $\pm 0.013$ \\
\bottomrule
\end{tabular}
\caption{Showing that $F$ and $r$ are contractive: Mean MSE of 50 samples from random Gaussian distribution, of their corresponding features and reconstructions over 500 trails - The main model and the AE are trained on Imagenet dataset. As we move from left to right in the table, the entries in the columns decrease significantly verifying that the mappings are contractive.}
\label{Tab:F_r_contractive_imagenet}
\end{table}

\subsection{Multiple task scenario}
 Each time a task is added, a new source of divergence from the joint-training objective is added. The difference with \eqref{eq:StatRisk}  grows with $\mathcal{T}$. Indeed, at each step, an extra irreducible $\|r\circ F^*(\mathcal{X}^{(\mathcal{T}-1)}) -  r\circ F^*(\mathcal{X}^{(\mathcal{T})})\|$ is added. Moreover, for each task, the autoencoders of the previous tasks are trained using the corresponding feature extractors. The remaining difference between the two losses that is neither directly controlled nor decreasing while training the model can be expressed as follows (for a constant $K$): 
\begin{align}
K\Biggl(&\sum_{t=2}^\mathcal{T} \|r_{t-1}\circ F^{(*,t-1)}(\mathcal{X}^{(t-1)}) - r_{t-1}\circ F^{(*,t-1)}(\mathcal{X}^{(t)})\| \label{eq:RecsDistances} \\
& +  \sum_{t=2}^{\mathcal{T}} \|r_{t-1}\circ F(\mathcal{X}^{(\mathcal{T})}) - F(\mathcal{X}^{(\mathcal{T})})\| \label{eq:UncontrolledDistances}\\
& + \sum_{t=1}^{\mathcal{T}-2} \|r_{t}\circ F^{(*,\mathcal{T}-1)}(\mathcal{X}^{(t)}) - r_t\circ F^{(*,t)}(\mathcal{X}^{(t)})\|\Biggr)\label{eq:FeatsDistances},
\end{align}

where $F^{(*,t)}$ is the feature extraction operator after training the model on the task $t$ data. 
As observed for~\eqref{eq:rec1-rec2}, expressions \eqref{eq:RecsDistances} and \eqref{eq:FeatsDistances} remain small and the conclusion of the experiment in Sec.~\ref{Sec:ExpEncoderDist} holds also for these distances. Therefore, their effect on the growth of the difference with joint-training is minimal.  
In contrast to the other terms, as for~\eqref{eq:recErrX2}, controlling \eqref{eq:UncontrolledDistances} may prevent the network from converging for the new task. Thus, relaxing these distances is an important degree of freedom for our method.
% TT: This last sentence is unclear to me.

\section{Additional experiments}

The experiments in this section aim to show that the success of the proposed method is independent of the chosen model. For this purpose, we train VGG-verydeep-16~\cite{simonyan2014very} in a two-task image classification scenario and compare our method against the state-of-the-art (LwF)~\cite{li2016learning}.

\paragraph{Tested scenario} We test our method and LwF on a two-task scenario starting from ImageNet (LSVRC 2012 subset)~\cite{russakovsky2015imagenet} (more than 1 million training images) then training on  MIT \textit{Scenes} \cite{quattoni2009recognizing} for indoor scene classification (5,360 samples). The showed results are obtained on the test data of Scenes and the validation data of ImageNet.

\paragraph{Architecture}
\begin{figure*}[!ht]
\centering
\includegraphics[width=0.7\textwidth]{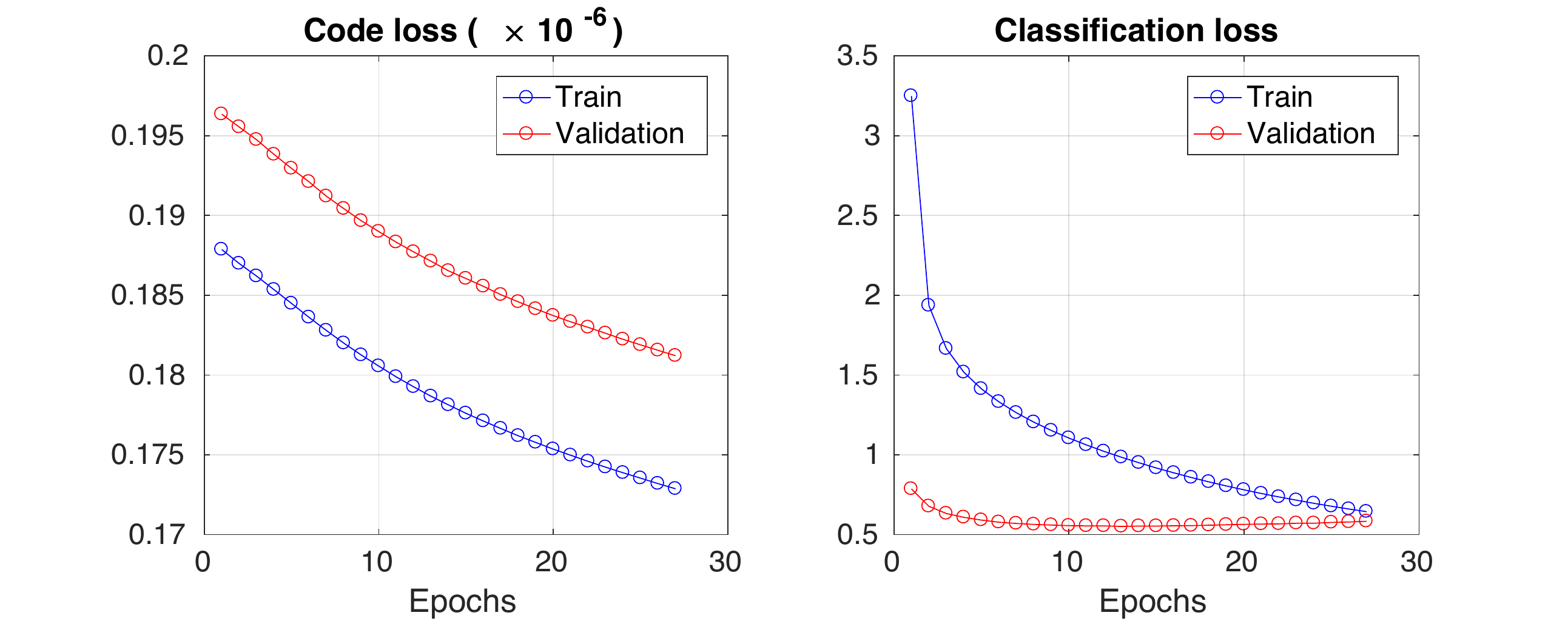}
\caption{Training of VGG-verydeep-16 based autoencoder for ImageNet - The objective makes the code loss \emph{and} the classification loss decrease. The training is stopped when we observe a convergence of the classification loss.  }
\label{fig:VGG_AE_loss}
\end{figure*}
We experiment with VGG-verydeep-16~\cite{simonyan2014very} due to its popularity and success in many image classification tasks. The feature extraction block $F$ corresponds to the convolutional layers. Note that this architecture has feature extractor twice as deep as AlexNet~\cite{krizhevsky2012imagenet} (used in the main text). As for the experiments conducted using AlexNet, the shared task operator $T$ corresponds to all but the last fully connected layers (i.e., $fc6$ and $fc7$), while the task-specific part $T_i$ contains the last classification layer ($fc8$). The used hyperparameters are the same as in the main text: same $\alpha$  ($10^{-3}$) for the training of our method, and same  $\lambda$ ($10^{-6}$) for the autoencoder training. The used architecture for the autoencoder is also similar to the main text (2-layers with a sigmoid non-linearity in between) with a code of 300 entries. Figure~\ref{fig:VGG_AE_loss} shows the evolution of the code error and the classification error during the training of the autoencoder on ImageNet features extracted using VGG-verydeep-16.

\paragraph{Autoencoder size}
To illustrate the growth of the size of the autoencoder with the size of the model, we show in Table~\ref{tab:AE_size} the memory required by AlexNet and VGG-verydeep-16 while training for Scenes after ImageNet, along with the autoencoder input length (Feature size) and the memory required by the autoencoder during training with our method. Naturally, the autoencoder size grows with the feature length, but remains very small comparing with the size of the global model. 

\begin{table}
\begin{tabular}{llll}
\toprule
Model & Size & Feature size  & Autoencoder size \\
\midrule
AlexNet & 449 MB & 9216 & 10 MB (2.2\%) \\
VGG & 1.1 GB & 25088  & 28 MB (2.5\%)\\
\bottomrule
\end{tabular}
\caption{Size of the used autoencoders compared to the size of the models during training. The model size corresponds to the required memory during training. The feature size corresponds to the length of the feature extractor output.}
\label{tab:AE_size}
\end{table}

\paragraph{Results: Method behavior comparison}
\begin{figure*}[!ht]
\centering
\includegraphics[width= \textwidth]{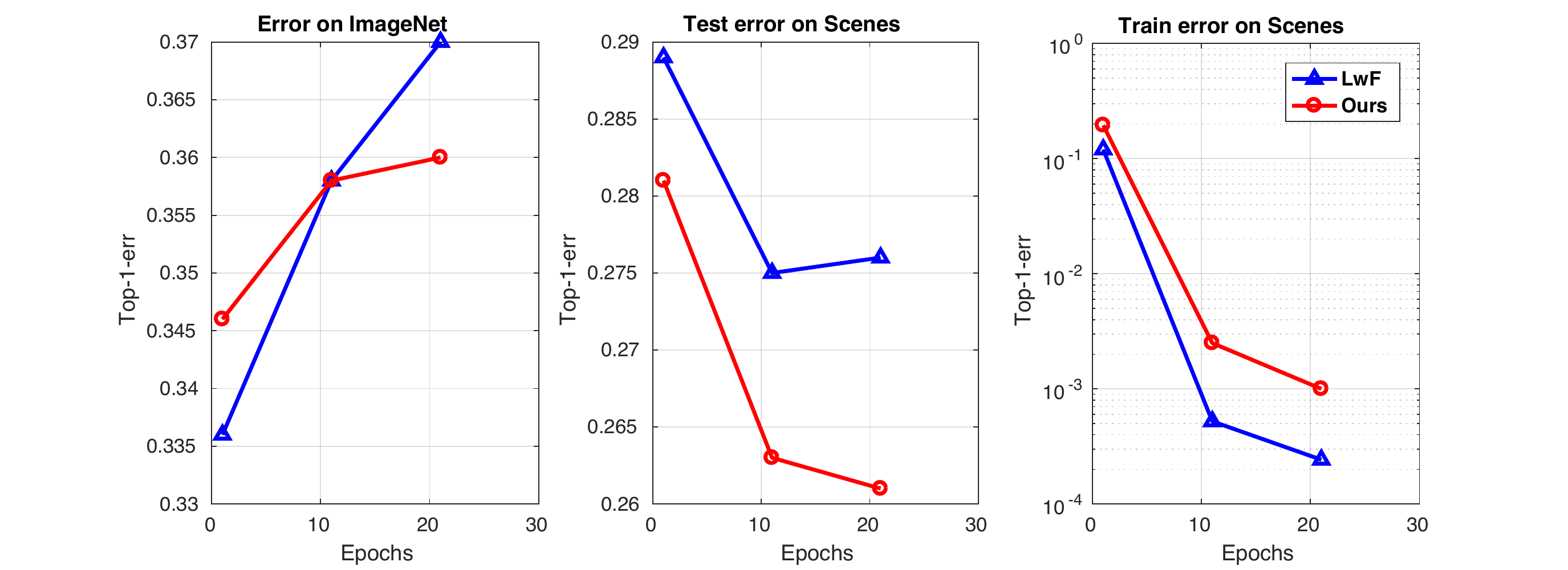}
\caption{Comparison between our method (red) and LwF (blue). Left : Evolution of the error on ImageNet validation set; it shows a slower loss of performance on ImageNet for our method - Center: Evolution of the error on Scenes test set - Right: Evolution of the error on Scenes training set. Center and Right suggest that our method benefits from a better regularization than LwF.}
\label{fig:MethodCompare}
\end{figure*}
Figure~\ref{fig:MethodCompare} shows the evolution of the model performance for both tasks, ImageNet and Scenes, when trained with our method (red curves) and with LwF (blue curves).

Our method shows a better preservation of the performance on ImageNet. Even if the classification error grows for both methods, it increases slower in our case. After 20 epochs, the performance on ImageNet is 1\% higher using our method.

The test and train errors on Scenes highlight an interesting characteristic of our method. The use of the code loss on top of LwF appears to act as a regularizer for the training on Scenes. Our method shows a slightly higher training error, and a better generalization. VGG-verydeep-16 is a large model, and the risk of overfitting while training on a small dataset like Scenes is higher than for AlexNet. A stronger regularization (using a higher value of $\alpha$) may thus result in an improvement of the behavior of our method.

\paragraph{Conclusion} From this experiment, we observe that the convergence of the autoencoder training and the improvement observed over LwF are not dependent on the used architecture. Moreover, the additional memory required by our method remains small with respect to the size of the global model. 

\end{document}